
\documentclass{article}

\usepackage{microtype}
\usepackage{graphicx}
\usepackage{subfigure}
\usepackage{booktabs} 

\usepackage{hyperref}



\usepackage[accepted]{icml2024}

\usepackage{amsmath}
\usepackage{amssymb}
\usepackage{mathtools}
\usepackage{amsthm}

\usepackage{graphicx}
\usepackage{caption}
\usepackage{subcaption}
\usepackage{float}
\usepackage{pifont}
\usepackage[inline]{enumitem}
\usepackage{wrapfig}
\usepackage{centernot}
\usepackage{adjustbox}
\usepackage{bm}
\usepackage{multirow}
\usepackage[capitalize,noabbrev]{cleveref}

\theoremstyle{plain}
\newtheorem{theorem}{Theorem}
\newtheorem{proposition}{Proposition}
\newtheorem{lemma}{Lemma}

\theoremstyle{definition}

\newtheorem{remark}{Remark}

\usepackage[textsize=tiny]{todonotes}

\icmltitlerunning{Graph Adversarial Diffusion Convolution}

\begin{document}

\twocolumn[
\icmltitle{Graph Adversarial Diffusion Convolution}



\icmlsetsymbol{equal}{*}

\begin{icmlauthorlist}
\icmlauthor{Songtao Liu}{psu}
\icmlauthor{Jinghui Chen}{psu}
\icmlauthor{Tianfan Fu}{rpi}
\icmlauthor{Lu Lin}{psu}
\icmlauthor{Marinka Zitnik}{harvard}
\icmlauthor{Dinghao Wu}{psu}
\end{icmlauthorlist}

\icmlaffiliation{psu}{The Pennsylvania State University}
\icmlaffiliation{rpi}{Rensselaer Polytechnic Institute}
\icmlaffiliation{harvard}{Harvard University}

\icmlcorrespondingauthor{Songtao Liu}{skl5761@psu.edu}

\icmlkeywords{Machine Learning, ICML}

\vskip 0.3in
]



\printAffiliationsAndNotice{}  

\begin{abstract}
This paper introduces a min-max optimization formulation for the Graph Signal Denoising (GSD) problem. In this formulation, we first maximize the second term of GSD by introducing perturbations to the graph structure based on Laplacian distance and then minimize the overall loss of the GSD. By solving the min-max optimization problem, we derive a new variant of the Graph Diffusion Convolution (GDC) architecture, called Graph Adversarial Diffusion Convolution (GADC). GADC differs from GDC by incorporating an additional term that enhances robustness against adversarial attacks on the graph structure and noise in node features. Moreover, GADC improves the performance of GDC on heterophilic graphs. Extensive experiments demonstrate the effectiveness of GADC across various datasets. Code is available at \url{https://github.com/SongtaoLiu0823/GADC}.

\end{abstract}

\section{Introduction}
Graph Neural Networks (GNNs)~\citep{kipf2017semi,hamilton2017inductive,velivckovic2018graph} have become a popular approach for graph-based tasks due to their powerful ability to learn node representations. They have demonstrated remarkable performance in various tasks, including traffic prediction~\citep{guo2019attention}, drug discovery~\citep{dai2019retrosynthesis}, and recommendation system~\citep{ying2018graph}. The core principle of GNNs is the message-passing operation, which aggregates node features from neighboring nodes, thereby enhancing the smoothness of the learned node representations. As a result, GNN models produce predictions based on both the features of individual nodes and those of their immediate neighbors.

Graph Diffusion Convolution (GDC)~\citep{gasteiger2019diffusion}, a specialized GNN architecture, aggregates information from higher-order neighbors through generalized graph diffusion. Various GDC architectures~\citep{li2019label,zhu2021simple,liu2021elastic,yang2021graph,zhao2021adaptive,jia2022unifying} are derived from the Graph Signal Denoising (GSD) problem, formulated as:
\begin{equation}
\label{graph signal denoising}
\underset{\mathbf{F}}{\arg \min } \ \mathcal{L}(\mathbf{F}):=\|\mathbf{F}-\mathbf{X}\|_F^2+\lambda \operatorname{tr}\left(\mathbf{F}^{\top} \tilde{\mathbf{L}} \mathbf{F}\right),
\end{equation}
where $\mathbf{X}=\mathbf{X}^*+\bm{\Upsilon}$ is the observed noisy feature matrix, $\bm{\Upsilon} \in \mathbb{R}^{n \times d}$ is the noise matrix, $\mathbf{X}^*$ is the clean feature matrix, and $\tilde{\mathbf{L}} \in \mathbb{R}^{n \times n}$ is the normalized graph Laplacian matrix. A major strength of the GDC architecture is its ability to aggregate information from a larger set of neighbors, enhancing $\ell_2$-based graph smoothing~\citep{liu2021elastic}.

Despite significant advancements in GDC architectures, they rely heavily on the Laplacian matrix to derive the graph diffusion matrix. This dependency can be problematic when the graph structure is disrupted by adversarial attacks~\citep{dai2018adversarial,zugner2018adversarial,zugner2019adversarial,jin2020graph}. Such attacks can cause GDC to aggregate harmful information, disrupting node representations. Additionally, the limited number of neighbors restricts GDC's ability to effectively filter out large noise in node features within some graphs~\citep{liu2021graph}. These challenges raise a crucial question: \emph{Can we develop a versatile GDC architecture that addresses these issues effectively?} In this work, we provide a positive solution to this question by reformulating the GSD problem. Based on this reformulation, we design a new GDC architecture that mitigates the negative impact of adversarial attacks on the graph structure, noise in node features, and inconsistent edges on heterophilic graphs.

Inspired by the saddle point formulation of adversarial training~\citep{madry2018towards}, we propose a min-max variant of the GSD problem, called the Adversarial Graph Signal Denoising (AGSD) problem: 
\begin{equation}
\label{intro:agsd}
      \underset{\mathbf{F}}{\arg\min } \  q(\mathbf{F}) := \left[\left\|\mathbf{F}-\mathbf{X}\right\|_{F}^{2}+\lambda \cdot \max_{\mathbf{L}^{\prime}} \  \operatorname{tr}\left(\mathbf{F}^{\top} \mathbf{L}^{\prime} \mathbf{F}\right)\right],
\end{equation}
where $\mathbf{L}^{\prime}$ is the added perturbation by the adversary. In the inner optimization problem, we maximize the second term of the AGSD. Then, we minimize the loss function $q(\mathbf{F})$ in the outer optimization problem. 

Unlike adversarial training, which requires projected gradient descent (PGD) to solve the inner maximization problem, our formulation has a closed-form solution for the inner optimization problem. Utilizing this closed-form solution, we solve the outer minimization problem to derive a new GDC architecture, Graph Adversarial Diffusion Convolution (GADC). This architecture introduces an additional term compared to GDC, resulting in more adaptive GDC variants that are resilient to adversarial perturbations in the edges and noise in node features. Moreover, our GADC enhances the performance of GDC on heterophilic graphs, where nodes from different classes are linked. Extensive experimental results across various datasets validate the effectiveness of our proposed algorithm. Additionally, our novel AGSD problem can inspire the development of advanced and reliable GNN architectures. 

\section{Preliminaries}
\paragraph{Notations.} Let $\mathcal{G}=(\mathcal{V}, \mathcal{E})$ represent a undirected graph, where $\mathcal{V}$ is the set of vertices $\left\{v_{1}, \cdots, v_{n}\right\}$ with $|\mathcal{V}|=n$ and $\mathcal{E}$ is the set of edges. The adjacency matrix is defined as $\mathbf{A} \in \{0,1\}^{n \times n}$, and $\mathbf{A}_{i,j}=1$ if and only if $\left(v_{i}, v_{j}\right) \in \mathcal{E}$. Let $\mathcal{N}_{i}=\{v_j | \mathbf{A}_{i,j} = 1\}$ denote the neighborhood of node $v_i$ and $\mathbf{D}$ denote the diagonal degree matrix, where $\mathbf{D}_{i,i}=\sum_{j=1}^{n} \mathbf{A}_{i,j}$. The normalized Laplacian matrix of a graph is defined as $\mathbf{L}=\operatorname{Laplacian}(\mathbf{A})=\mathbf{I}_n-\mathbf{D}^{-1 / 2} \mathbf{A D}^{-1 / 2}$, where $\mathbf{I}_n$ is an $n \times n$ identity matrix. The feature matrix is denoted as $\mathbf{X} \in \mathbb{R}^{n \times d}$. $\operatorname{tr}\left(\cdot\right)$ denotes the trace of the matrix. 

\paragraph{Spectral Graph Convolution.} Spectral convolution involves multiplying a signal $\mathbf{x}$ by a Fourier domain filter $g_\phi$, parameterized by coefficients $\phi \in \mathbb{R}^n$. This is expressed as:
\begin{equation}
g_\phi(\mathbf{L}) \star \mathbf{x}=\mathbf{U} g_\phi^*(\boldsymbol{\Lambda}) \mathbf{U}^{\top} \mathbf{x},
\end{equation}
where $g_\phi$ can be approximated by a truncated expansion. A common approach is to use Chebyshev polynomials up to the $K$-th order, resulting in the following approximation:
\begin{equation}
g_\phi^*(\boldsymbol{\Lambda}) \approx \sum_{k=0}^K \phi_k T_k(\tilde{\boldsymbol{\Lambda}}).
\end{equation}
\paragraph{Graph Convolution Network (GCN).} GCNs approximate spectral graph convolution using first-order Chebyshev polynomials. By setting $K=1$ and $\phi_0=-\phi_1$, and approximating $\lambda_n \approx 2$, the convolution becomes $g_\phi(\mathbf{L}) \star \mathbf{x}=\left(\mathbf{I}_n+\mathbf{D}^{-1 / 2} \mathbf{A} \mathbf{D}^{-1 / 2}\right) \mathbf{x}$. Introducing self-loops and renormalization trick modifies this to $\tilde{\bm{\mathcal{A}}}=\tilde{\mathbf{D}}^{-1 / 2} \tilde{\mathbf{A}} \tilde{\mathbf{D}}^{-1 / 2}$, where $\tilde{\mathbf{A}}=\mathbf{A}+\mathbf{I}_n$ and $\tilde{\mathbf{D}}=\mathbf{D}+\mathbf{I}_n$. The GCN layer is then defined as:
\begin{equation}
\mathbf{H}^{(l+1)}=\sigma\left(\tilde{\bm{\mathcal{A}}} \mathbf{H}^{(l)} \boldsymbol{\Theta}^{(l)}\right).
\end{equation}
Here, $\sigma$ is the activation function and $\boldsymbol{\Theta}^{(l)}$ are the trainable parameters in the $l$-th layer. 
\paragraph{Adversarial Training.}
\citet{madry2018towards} study the adversarial robustness of neural networks from the perspective of robust optimization as follows:
\begin{equation}
\underset{\Theta}{\arg\min } \ \rho(\Theta):=\mathbb{E}_{(x, y) \sim \mathcal{D}}\left[\max _{\delta \in \mathcal{S}} L(\Theta, x+\delta, y)\right],
\end{equation}
where $\Theta$ is the set of model parameters, $x$ is the input sample, $y$ is the label, $\mathcal{D}$ is the data distribution, $\delta$ is the perturbation, $\mathcal{S}$ is the perturbation space, $L$ is the loss function, and $\mathbb{E}$ is the empirical risk. The inner maximization problem attacks the neural network, while the outer minimization problem finds model parameters to make it robust against these adversarial attacks. In this work, we introduce a min-max formulation for the GSD problem, known as AGSD.
\paragraph{Graph Diffusion Convolution (GDC).}
Graph diffusion~\citep{gasteiger2019diffusion} is defined via the diffusion matrix: 
\begin{equation}
\label{eq:diffusion matrix}
\mathbf{S}=\sum_{k=0}^{\infty} \theta_k \mathbf{T}^k,
\end{equation} 
where $\theta_k$ are the weighting coefficients and $\mathbf{T}$ is the transition matrix. Existing GNN architectures such as APPNP~\citep{klicpera2019predict}, GLP~\citep{li2019label}, S$^2$GC~\citep{zhu2021simple}, and AirGNN~\citep{liu2021graph} can be viewed as variations of GDC. In this work, we introduce a new GDC architecture based on our proposed AGSD.

\section{Graph Adversarial Diffusion Convolution}
In this section, we first introduce the min-max variant of the graph signal denoising (GSD) problem in Section~\ref{ssec:AGSD}. Then, we discuss the closed-form solution for the inner maximization problem in Section~\ref{ssec:closed-form solution}. In Section~\ref{ssec:gadnn}, we propose graph adversarial diffusion convolution (GADC) by solving the outer minimization problem. Finally, Sections~\ref{adversarial attack}, \ref{sec:noisy graphs}, and \ref{sec:heterophilic graph} introduce four adaptive GADC variants designed to enhance robustness against adversarial attacks on the graph structure and noise in node features, as well as improve performance on heterophilic graphs.

\subsection{Adversarial Graph Signal Denoising Problem} 
\subsubsection{Problem Formulation}
\label{ssec:AGSD}
Adversarial training has been recently explored in the robust optimization. \citet{madry2018towards} use a saddle point (min-max) formulation to incorporate protection against adversarial attacks into neural networks. In the inner maximization problem, the adversary uses projected gradient descent (PGD) to identify the worst-case adversarial perturbations that maximize the loss. The outer minimization problem seeks to find model parameters that minimize the adversarial loss generated by the inner attack problem. 

Inspired by the saddle point (min-max) formulation in adversarial training, we introduce the Adversarial Graph Signal Denoising (AGSD) problem as shown in Eq.~\eqref{intro:agsd}. In the AGSD problem, the first term ensures the solution is close to the observed data, while the second term involves a graph Laplacian matrix. Defining perturbations on the graph structure to maximize the second term in the AGSD problem is less straightforward than traditional adversarial attacks. Inspired by the recent work~\citep{lin2022graph} that uses spectral distance to deploy graph structural attacks, we leverage Laplacian distance to introduce perturbations. 
\paragraph{Laplacian Distance.} \citet{lin2022graph} define the spectral distance as the changes in the eigenvalues of the graph Laplacian matrix, formulated as follows:
\begin{equation}
    \mathcal{D}_{\text {spectral }}=\left\|g_\phi^*(\boldsymbol{\Lambda})-g_\phi^*\left(\boldsymbol{\Lambda}^{\prime}\right)\right\|_2,
\end{equation}
where $\boldsymbol{\Lambda}$ and $\boldsymbol{\Lambda}^{\prime}$ denote the eigenvalues of the normalized graph Laplacian matrix for the original graph $\mathcal{G}$ and the disrupted graph $\mathcal{G}^{\prime}$. The idea behind using the spectral distance for graph structural attacks is that perturbing the graph to maximize this distance induces the most harmful perturbation to the graph filters~\citep{chang2021not} and significantly disrupts node embeddings. To make the attack simple and effective, we directly add perturbations to the graph filter. This allows us to define the Laplacian distance.
\begin{equation}
    \mathcal{D}_{\text {Laplacian }}=\left\|g_\phi(\tilde{\mathbf{L}})-g_\phi\left(\mathbf{L}^{\prime}\right)\right\|_F
   =\left\|\mathbf{L}^{\prime}-\tilde{\mathbf{L}}\right\|_{F},
\end{equation}
where $\tilde{\mathbf{L}}=\mathbf{I}_n-\tilde{\mathbf{D}}^{-1 / 2} \tilde{\mathbf{A}} \tilde{\mathbf{D}}^{-1 / 2}$ denotes the normalized graph Laplacian matrix.
\paragraph{AGSD.} Based on the Laplacian distance, we introduce our AGSD as follows:
\begin{equation}
\label{adv gsd}
\begin{gathered}
      \underset{\mathbf{F}}{\arg\min } \  q(\mathbf{F}) := \left[\left\|\mathbf{F}-\mathbf{X}\right\|_{F}^{2}+\lambda \cdot \max_{\mathbf{L}^{\prime}} \  \operatorname{tr}\left(\mathbf{F}^{\top} \mathbf{L}^{\prime} \mathbf{F}\right)\right] \\  \operatorname{s.t.}  \ \left\|\mathbf{L}^{\prime}-\tilde{\mathbf{L}}\right\|_{F} \leq \varepsilon.
      \end{gathered}
\end{equation}
In the inner optimization problem, a hypothetical adversary generates perturbations based on the Laplacian distance to create a modified Laplacian matrix, $\mathbf{L}^{\prime}$, which aims to maximize the second term of the AGSD. Then, the outer minimization problem seeks to find $\mathbf{F}$ that minimizes the overall loss function $q(\mathbf{F})$.
\begin{figure}[t]
 \centering
\includegraphics[width=0.5\textwidth]{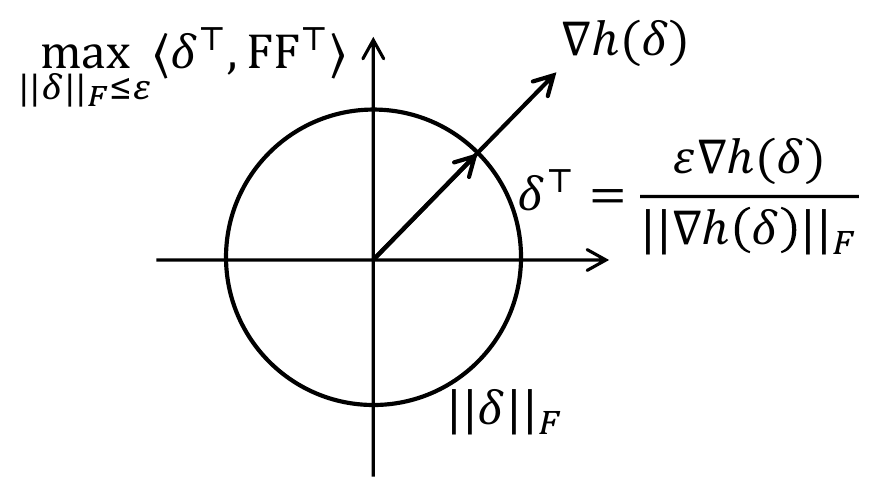}
\caption{Illustration of the inner maximization problem. The loss function reaches the largest value when the direction of $\bm{\delta}^{\top}$ is the same as $\nabla h(\bm{\delta})$.}
\label{figure:adv inner problem}
\end{figure}
\subsubsection{Closed-form Solution of the Inner Maximization Problem}
\label{ssec:closed-form solution}
The min-max formulation in Eq.\eqref{adv gsd} introduces a more complex GSD problem. In contrast to adversarial training~\citep{madry2018towards}, where the inner maximization problem is solved using PGD before solving the outer minimization problem at each training step, our inner maximization problem is a quadratic optimization problem. This removes the need for computationally expensive adversarial perturbations at each step. Instead, we can directly identify the largest perturbation. 

Let us denote the perturbations as $\bm{\delta}$, and the perturbed Laplacian matrix as $\mathbf{L}'=\tilde{\mathbf{L}} + \bm{\delta}$. Due to the quadratic nature of our inner maximization problem, we can reformulate it as follows:
\begin{equation}
\begin{gathered}
    \max_{\mathbf{L}^{\prime}} \operatorname{tr}\left(\mathbf{F}^{\top} \mathbf{L}^{\prime} \mathbf{F}\right) =\langle\tilde{\mathbf{L}}^{\top}, \mathbf{F}\mathbf{F}^{\top}\rangle + \max_{\bm{\delta}}\langle\bm{\delta}^{\top}, \mathbf{F}\mathbf{F}^{\top}\rangle \\ \operatorname{s.t.} \ \left\|\bm{\delta}\right\|_{F} \leq \varepsilon.
    \end{gathered}
\end{equation}
We denote $h(\bm{\delta})=\langle\bm{\delta}^{\top}, \mathbf{F}\mathbf{F}^{\top}\rangle$. $h(\bm{\delta})$ reaches the largest value when $\bm{\delta}^{\top}$ has the same direction with the gradient of $h(\bm{\delta})$, i.e. $\bm{\delta}^{\top}=\frac{\varepsilon\nabla h(\bm{\delta})}{\left\|\nabla h(\bm{\delta})\right\|_{F}}=\frac{\varepsilon\mathbf{F}\mathbf{F}^{\top}}{\left\|\mathbf{F}\mathbf{F}^{\top}\right\|_{F}}$. An illustration is provided in Figure~\ref{figure:adv inner problem}. Plugging this solution into Eq.~\eqref{adv gsd}, we can rewrite the outer optimization problem as follows:
\begin{equation}
\label{eq:p(h)}
\begin{gathered}
\underset{\mathbf{F}}{\arg\min } \  q(\mathbf{F}), \\ 
         q(\mathbf{F})=\left[\left\|\mathbf{F}-\mathbf{X}\right\|_{F}^{2}+\lambda \operatorname{tr}\left(\mathbf{F}^{\top} \tilde{\mathbf{L}} \mathbf{F}\right)+\lambda\varepsilon\operatorname{tr}\frac{\mathbf{F}^{\top}\mathbf{F}\mathbf{F}^{\top}\mathbf{F}}{\left\|\mathbf{F}\mathbf{F}^{\top}\right\|_{F}}\right],
        \end{gathered}
\end{equation}
where we introduce an extra loss term $\lambda\varepsilon\operatorname{tr}\frac{\mathbf{F}^{\top}\mathbf{F}\mathbf{F}^{\top}\mathbf{F}}{\left\|\mathbf{F}\mathbf{F}^{\top}\right\|_{F}}$ compared with the GSD.
\subsubsection{Modified Transition Matrix Derived by Outer Minimization Problem} 
\label{ssec:gadnn}
In this section, we derive our graph adversarial diffusion convolution. By taking the gradient of Eq.~\eqref{eq:p(h)} to zero, we get the solution of the outer optimization problem as:
\begin{equation}
\label{eq:advF}
    \mathbf{F} = \left(\mathbf{I}+\lambda\tilde{\mathbf{L}}+\lambda\frac{\varepsilon\mathbf{F}\mathbf{F}^{\top}}{\left\|\mathbf{F}\mathbf{F}^{\top}\right\|_{F}}\right)^{-1}\mathbf{X}.
\end{equation}
Computing Eq.~(\ref{eq:advF}) directly involves a matrix inverse operation, resulting in a complexity of $\mathcal{O}\left(n^3\right)$. This high computational cost can be prohibitively expensive for large graphs. Inspired by the closed-form solution of Personalized PageRank (PPR) kernel~\citep{brin1998pagerank,gasteiger2019diffusion} (graph diffusion), we can solve Eq.~\eqref{eq:advF} via graph diffusion (Eq.~\eqref{eq:diffusion matrix}) and obtain
\begin{equation}
\label{eq:graph adversarial diffusion inverse}
    \mathbf{F} = \frac{1}{\lambda+1} \sum_{k=0}^K\left[\frac{\lambda}{\lambda+1} \left(\tilde{\bm{\mathcal{A}}}-\frac{\varepsilon\mathbf{F}\mathbf{F}^{\top}}{\left\|\mathbf{F}\mathbf{F}^{\top}\right\|_{F}}\right)\right]^k\mathbf{X},
\end{equation}
where $\mathbf{T}=\tilde{\mathbf{D}}^{-1 / 2} \tilde{\mathbf{A}} \tilde{\mathbf{D}}^{-1 / 2}-\frac{\varepsilon\mathbf{F}\mathbf{F}^{\top}}{\left\|\mathbf{F}\mathbf{F}^{\top}\right\|_{F}}=\tilde{\bm{\mathcal{A}}}-\frac{\varepsilon\mathbf{F}\mathbf{F}^{\top}}{\left\|\mathbf{F}\mathbf{F}^{\top}\right\|_{F}}$ represents the transition matrix, and $\alpha =\frac{1}{\lambda+1}$ denotes the coefficient.
\paragraph{Approximate Solution of Eq.~\eqref{eq:graph adversarial diffusion inverse}.} Although we approximate the inverse matrix with graph diffusion to avoid $\mathcal{O}(n^3)$ complexity, the matrix series Eq.~\eqref{eq:graph adversarial diffusion inverse} still involves high powers of the matrix $\mathbf{F}$. Obviously, it is impossible to get an analytical solution of $\mathbf{F}$ for this equation. Current methods exploit iterative algorithms such as Newton's method~\citep{more1982newton} to solve this high-order nonlinear system of equations, but the computation is also very expensive. Therefore, there is no suitable solution that strictly follows the equation while also avoiding large computational complexity. We need a more efficient algorithm to solve Eq.~\eqref{eq:graph adversarial diffusion inverse}. We observe that the first term of AGSD requires $\mathbf{F}$ to be close to $\mathbf{X}$, thus a natural idea is to replace $\mathbf{F}$ with $\mathbf{X}$ in the right side of Eq.~\eqref{eq:graph adversarial diffusion inverse} to reduce the computational complexity and improve the scalability for large graphs. Thus, we now formally propose our GADC as follows:
\begin{equation}
\label{eq:graph adversarial diffusion}
    \mathbf{S} = \frac{1}{\lambda+1}\sum_{k=0}^{K}\left[\frac{\lambda}{\lambda+1}\mathbf{T}\right]^{k},
\end{equation}
where $\mathbf{T}=\tilde{\bm{\mathcal{A}}}-\frac{\varepsilon\mathbf{X}\mathbf{X}^{\top}}{\left\|\mathbf{X}\mathbf{X}^{\top}\right\|_{F}}$. Note that the additional term $\frac{\varepsilon\mathbf{X}\mathbf{X}^{\top}}{\left\|\mathbf{X}\mathbf{X}^{\top}\right\|_{F}}$ is a unique component introduced by the closed-form solution of the inner maximization problem of AGSD. This term doesn't appear in existing GDC architectures derived from the GSD problem. We will use this term to produce more adaptive architectures, as demonstrated in the following sections.
\paragraph{Error Analysis.} We plug our obtained $\mathbf{F}$ into Eq.~\eqref{eq:graph adversarial diffusion inverse} to compute the error of this approximate solution for the equation on the Cora~\citep{sen2008collective} dataset. The error matrix is computed by subtracting the left side from the right side: $\mathbf{F}-\frac{1}{\lambda+1} \sum_{k=0}^K\left[\frac{\lambda}{\lambda+1} \mathbf{T}\right]^k \mathbf{X}$, where $\mathbf{T}=\tilde{\bm{\mathcal{A}}}-\frac{\varepsilon\mathbf{X}\mathbf{X}^{\top}}{\left\|\mathbf{X}\mathbf{X}^{\top}\right\|_{F}}$. The norm of the error matrix for our solution is 4.5. As a comparison, we generate a random $\mathbf{F}$ from a Gaussian distribution with mean=1, std=1. If we use this random $\mathbf{F}$, the norm of the error matrix is 2780.6. Comparing the norm of the error matrix between our solution (4.5) and a random solution (2780.6), we can conclude that replacing $\mathbf{F}$ with $\mathbf{X}$ indeed achieves an accurate approximation, while greatly reducing the computational complexity.
\paragraph{Scalability.} To leverage the additional term across various graphs, we enhance the scalability of the transition matrix by providing the following options: computing the normalized or unnormalized inner product of feature vectors $\left(\mathbf{X}_i, \mathbf{X}_j \mid j \in \mathcal{N}_i\right)$ between adjacent neighbors, or between every pair of nodes, similar to the masked/unmasked attention mechanism in GAT. Therefore, our modified transition matrix can be formulated as follows:
\begin{equation}
\label{eq:options}
\begin{split}
 &\mathbf{T}=\tilde{\bm{\mathcal{A}}}-\varepsilon\mathbf{\Phi}, \ \operatorname{where} \\ 
 \operatorname{(\uppercase\expandafter{\romannumeral1}):} \ &\Phi_{ij}:= \begin{cases} \frac{\mathbf{X}_i \mathbf{X}_j^{\top}}{\left\|\mathbf{X}_i\right\|_2\left\|\mathbf{X}_j\right\|_2}, & \text {if} \left(v_i, v_j\right) \in \mathcal{E} \\ 0, & \text {otherwise},\end{cases}; \\
 \operatorname{(\uppercase\expandafter{\romannumeral2}):} \ &\Phi_{ij}:=\frac{\mathbf{X}_i \mathbf{X}_j^{\top}}{\left\|\mathbf{X}\mathbf{X}^{\top}\right\|_{F}};\\
 \operatorname{(\uppercase\expandafter{\romannumeral3}):} \ &\Phi_{ij}:= \begin{cases} \frac{\mathbf{X}_i \mathbf{X}_j^{\top}}{\left\|\mathbf{X}\mathbf{X}^{\top}\right\|_{F}}, & \text {if} \left(v_i, v_j\right) \in \mathcal{E} \\ 0, & \text {otherwise},\end{cases}.
 \end{split}
\end{equation}
For the first option, we normalize the weights by using the product of the norms of the features of connected neighbors, which measures the similarity between these neighboring nodes' features. We will derive the fourth option from the first option to defend against adversarial attacks on the graph structure. The second option improves graph connectivity, filtering out large noise in graphs. Furthermore, we discover that even without explicitly incorporating additional graph connectivity (the third option), reassigning edge weights through the additional term improves denoising performance on large graphs. Moreover, using the first option to penalize the weights of connected neighbors enhances performance on heterophilic graphs. We will discuss the details of these options in the subsequent sections. 
\paragraph{Connection to APPNP.} If we take the limit $k \rightarrow \infty$ of power iterations in APPNP~\citep{klicpera2019predict}, APPNP converges to $\mathbf{Z}^{(\infty)}=\alpha\left(\mathbf{I}-(1-\alpha) \tilde{\bm{\mathcal{A}}}\right)^{-1} \mathbf{Z}^{(0)}$, similar to Eq.~\eqref{eq:advF}. Our proposed GADC introduces an additional term $\varepsilon \boldsymbol{\Phi}$ in the transition matrix. This additional term makes GADC more adaptable compared to APPNP. As we will demonstrate in the upcoming sections, this adaptability enhances resilience in various scenarios, including adversarial attacks on the graph structure, noise in node features, and inconsistent edges on heterophilic graphs.

\subsection{Defending Against Graph Adversarial Attacks}
\label{adversarial attack}
Recent studies~\citep{zugner2018adversarial,zugner2019adversarial} have shown that GNNs are vulnerable to adversarial attacks on the graph structure. These attacks often involve adding harmful edges or removing informative ones to manipulate the information flow, thus preventing GNNs from aggregating valuable information and disrupting node representations. GDC relies on the graph structure to derive the diffusion matrix needed for smoothing node representations. However, when the graph structure is disrupted by adversarial attacks, it becomes unreliable, and we cannot use the graph Laplacian matrix to aggregate information from neighbors. Inspired by GNNGuard~\citep{zhang2020gnnguard}, which assigns higher weights to edges connecting nodes with similar features while pruning edges between unrelated nodes, we employ the additional term to achieve a similar effect. The rationale is that on homophily graphs, nodes within the same class have similar features, while nodes from different classes exhibit dissimilar features. Therefore, we use the first option in Eq.~\eqref{eq:options} and assign a very large value to the additional term in the modified transition matrix, allowing this term to determine the edge weights:
\begin{equation}
  \mathbf{T}=\tilde{\bm{\mathcal{A}}}-\lim_{\varepsilon\rightarrow \infty}\varepsilon\mathbf{\Phi}.
\end{equation}
However, this approach may result in a numerical explosion problem. To address this issue, we introduce a trick in which we compute the modified transition matrix based on the additional term with $\mathbf{T}=-\lim_{\varepsilon\rightarrow \infty}\frac{1}{\varepsilon}\left(\tilde{\bm{\mathcal{A}}}-\varepsilon\mathbf{\Phi}\right)$. As such, we employ the cosine value to evaluate the similarity among connected neighbors and recalculate the weights of the transition matrix as follows: 
\begin{equation}
\label{eq:fourth}
  \operatorname{(\uppercase\expandafter{\romannumeral4}):} \  \mathbf{T}_{ij} = \left(\mathbf{X}_i \odot \mathbf{X}_j\right) /\left(\left\|\mathbf{X}_i\right\|_2\left\|\mathbf{X}_j\right\|_2\right) \ \operatorname{s.t.}\ \mathbf{A}_{ij}=1,
\end{equation}
where $\mathbf{A}$ is the disrupted adjacency matrix by adversarial attacks and $\odot$ denotes the inner product. By implementing this trick, we can propose the fourth option to reconstruct the informative adjacency matrix using the additional term, thereby restoring the beneficial information flow during the aggregation process.
\subsection{Tackling Large Noise in Node Features}
\label{sec:noisy graphs}
In the study by \citet{liu2021graph}, feature aggregation in GNNs is suggested to function as a low-pass filter, smoothing node features across neighborhoods and filtering out node feature noise~\citep{nt2019revisiting, zhao2019pairnorm}. However, on graphs with a limited number of neighbors, GDC's ability to effectively filter out large noise in node features is constrained. To address this, we provide a theoretical analysis to understand the effect and introduce two simple and effective GADC (II, III) methods to handle noisy node features in graphs. 
\subsubsection{Convergence Analysis for the Aggregated Noisy Matrix}
Consider applying GDC to a noisy node feature matrix, represented as the product of $\mathbf{S}$ and $\bm{\Upsilon}$:
\begin{equation}
\mathbf{S}\bm{\Upsilon},
\end{equation}
where $\bm{\Upsilon}$ denotes the noise matrix. Intuitively, if the matrix norm $\|\mathbf{S}\bm{\Upsilon}\|_{F}$ can converge to a small value, the denoising effect is achieved. Consider the noise utilized in~\citep{zhou2021graph,chen2021graph,zhang2022graphless} follows Gaussian distribution, which is also covered by sub-Gaussian variable. Therefore, the noise matrix has the following property. Based on this property, we provide the analysis.
\begin{proposition}[\textbf{Noise Property}]
Each entry of the noise matrix $\bm{\Upsilon}$, i.e., $[\bm{\Upsilon}]_{ij}$ is i.i.d sub-Gaussian random variable with variance $\sigma$ and mean $\mu=0$, i.e.,
\begin{equation}
    \mathbb{E}\left[e^{\lambda([\bm{\Upsilon}]_{ij}-\mu)}\right] \leq e^{\sigma^{2} \lambda^{2} / 2} \quad \text { for all } \lambda \in \mathbb{R}.
\end{equation}
\end{proposition}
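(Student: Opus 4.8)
The statement is essentially a modeling assumption packaged as a proposition, so the "proof" really amounts to two things: verifying that the Gaussian noise used in the works cited just above the statement genuinely satisfies the stated moment generating function bound, and recording the i.i.d.\ structure that the noise model already posits. The plan is to do exactly this.

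First I would fix a single coordinate $[\bm{\Upsilon}]_{ij}$ and compute its moment generating function directly. If $[\bm{\Upsilon}]_{ij}\sim\mathcal{N}(0,\sigma^2)$, then completing the square in the Gaussian integral yields $\mathbb{E}\!\left[e^{\lambda([\bm{\Upsilon}]_{ij}-\mu)}\right]=e^{\sigma^2\lambda^2/2}$ for every $\lambda\in\mathbb{R}$ with $\mu=0$, so the required inequality holds — in fact with equality. To make the statement cover the slightly broader setting alluded to in the text, I would also note that any zero-mean noise supported in a bounded interval $[-a,a]$ satisfies the same bound with $\sigma$ replaced by $a$ by Hoeffding's lemma, so the admissible noise is not restricted to the exactly Gaussian case. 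This is the only "computation," and it is a one-line standard fact.

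Second, I would simply invoke that the entries of $\bm{\Upsilon}$ are drawn independently and identically, which is part of the generative model of $\bm{\Upsilon}$ assumed in the referenced works rather than something to be derived; hence the per-entry bound extends verbatim to all $nd$ coordinates with a common proxy parameter $\sigma$, giving the matrix-level statement as written.

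I do not expect a genuine obstacle here; the only points requiring care are purely notational. The symbol $\lambda$ in the sub-Gaussian inequality is a free real parameter and must not be conflated with the regularization coefficient $\lambda$ appearing in the GSD/AGSD objectives, and "variance $\sigma$" should be read as the sub-Gaussian proxy variance (equivalently, a scale parameter) rather than literally $\mathrm{Var}([\bm{\Upsilon}]_{ij})$, since in the Gaussian case the inequality is tight precisely when $\sigma$ is the standard deviation. With this fixed, the proposition is ready to be used downstream to bound $\|\mathbf{S}\bm{\Upsilon}\|_F$ through concentration of sums of independent sub-Gaussian variables.
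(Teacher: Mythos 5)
Your proposal is correct and matches the paper's treatment: the paper offers no proof of this proposition at all, merely noting that the Gaussian noise used in the cited works is covered by the sub-Gaussian definition, which is exactly the one-line moment-generating-function verification you carry out. Your additional remarks on the $\lambda$ notation clash and on reading ``variance $\sigma$'' as the sub-Gaussian proxy parameter are accurate clarifications of loose wording in the statement, not deviations from the paper's argument.
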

\paragraph{Higher-order Graph Connectivity Factor.}
Intuitively, $\mathbf{S}$ captures not only the connectivity of the graph structure (represented by $\widetilde{\bm{\mathcal{A}}}$), but also the higher-order connectivity (represented by $\widetilde{\bm{\mathcal{A}}}^2, \widetilde{\bm{\mathcal{A}}}^3, \ldots, \widetilde{\bm{\mathcal{A}}}^K$). As we will discuss later, greater higher-order graph connectivity can accelerate the convergence of the noise matrix. To formally quantify higher-order graph connectivity, we provide the following definition: 
\begin{equation}
\label{evenness factor}
    \tau = \max_i \tau_i, \ \ \text{ where } \tau_i = {n\sum_{j=1}^{n}\left[\mathbf{S}\right]_{ij}^2}\Bigg/{\left(\left[\mathbf{S}\right]_{ij}\right)^2}. 
\end{equation}
\begin{remark} 
\emph{Here, we provide some intuition on why Eq.~(\ref{evenness factor}) represents higher-order graph connectivity. Note that each element in $\mathbf{S}$ is non-negative, and each row sum satisfies~\footnote{This result is obtained by using $\widetilde{\bm{\mathcal{A}}}=\widetilde{\mathbf{D}}^{-1}\widetilde{\mathbf{A}}$ for ease of theoretical analysis, while in experiments we use the more common $\widetilde{\bm{\mathcal{A}}}=\widetilde{\mathbf{D}}^{-\frac{1}{2}}\widetilde{\mathbf{A}} \widetilde{\mathbf{D}}^{-\frac{1}{2}}$ as suggested in GCN. The proof can be found in Appendix~\ref{appendix:row sum}.}
\begin{equation}
\label{row sum}\sum_{j=1}^{n}\left[\mathbf{S}\right]_{ij} 
    =1-\left(\frac{\lambda}{\lambda+1}\right)^{K+1}=\beta.
\end{equation}
Based on Eq.~\eqref{row sum}, the sum of squares of elements in each row satisfies:
\begin{equation}
\label{range}
   {\beta^2}\bigg/{n} \leq \sum_{j=1}^{n}\left[\mathbf{S}\right]_{ij}^2 \leq \beta^2.
\end{equation}
When a graph has high connectivity, meaning the elements in row $i$ of $\bm{S}$ are more uniformly distributed, Eq.~\eqref{range} reaches its lower bound. Conversely, if a graph is poorly connected and only one element in row $i$ is greater than zero, Eq.~\eqref{range} reaches its upper bound. Therefore, the value of $\tau \in [1, n]$ is determined as follows: when the higher-order graph connectivity factor is large, $\tau \to 1$, and when the graph is less connected, $\tau \to n$.}
\end{remark}
\begin{theorem}[\textbf{Upper Bound}]
\label{upper bound of aggregated noised matirx}
Suppose we choose $t=2\tau\left(4\log n+\log{2d}\right)/n$. Then, with a high probability of $1-1/d$, we have 
\begin{equation}
    \left\|\mathbf{S}\bm{\Upsilon}\right\|_{F}^{2}\leq \frac{2\tau\left(1-\left(\frac{\lambda}{\lambda+1}\right)^{K+1}\right)^2\sigma^2\left(4\log n+\log{2d}\right)}{n}.
\end{equation}
\end{theorem}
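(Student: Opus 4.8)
The plan is to exploit that $\mathbf{S}$ is a fixed (data-dependent but non-random) matrix, so that the entries of $\mathbf{S}\bm{\Upsilon}$ are independent zero-mean sub-Gaussian variables whose variance proxies are governed by the squared row norms of $\mathbf{S}$; a Chernoff tail bound on each squared entry, followed by a union bound over the $nd$ coordinates, then yields the claim. First I would write $[\mathbf{S}\bm{\Upsilon}]_{il}=\sum_{j=1}^{n}[\mathbf{S}]_{ij}[\bm{\Upsilon}]_{jl}$ and invoke the Noise Property together with the standard fact that a linear combination $\sum_{j}c_j Z_j$ of independent $\sigma$-sub-Gaussian variables is $\bigl(\sigma^2\sum_{j}c_j^2\bigr)$-sub-Gaussian; this makes each $[\mathbf{S}\bm{\Upsilon}]_{il}$ zero-mean sub-Gaussian with variance proxy $\nu_i^2:=\sigma^2\sum_{j=1}^{n}[\mathbf{S}]_{ij}^2$, hence $\Pr\bigl([\mathbf{S}\bm{\Upsilon}]_{il}^2\geq u\bigr)\leq 2\exp\bigl(-u/(2\nu_i^2)\bigr)$ for every $u>0$.

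Next I would translate $\nu_i^2$ into the quantities appearing in the statement. Using the row-sum identity Eq.~\eqref{row sum} (with the normalization $\widetilde{\bm{\mathcal{A}}}=\widetilde{\mathbf{D}}^{-1}\widetilde{\mathbf{A}}$ of the footnote), $\sum_{j}[\mathbf{S}]_{ij}=\beta=1-\bigl(\tfrac{\lambda}{\lambda+1}\bigr)^{K+1}$ for every $i$, so the definition of $\tau_i$ in Eq.~\eqref{evenness factor} gives $\sum_{j}[\mathbf{S}]_{ij}^2=\tau_i\beta^2/n\leq\tau\beta^2/n$ and therefore $\nu_i^2\leq\sigma^2\tau\beta^2/n=:\nu^2$ uniformly in $i$. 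Choosing $u=t\beta^2\sigma^2$ with $t=2\tau(4\log n+\log 2d)/n$ makes $u/(2\nu^2)=4\log n+\log 2d$, so each squared entry exceeds $u$ with probability at most $2\exp(-(4\log n+\log 2d))=1/(n^4d)$; a union bound over the $nd$ entries then shows that with probability at least $1-nd\cdot n^{-4}d^{-1}=1-n^{-3}\geq 1-1/d$ (the last step valid whenever $d\leq n^3$) every squared entry of $\mathbf{S}\bm{\Upsilon}$ is at most $t\beta^2\sigma^2$. Summing the per-coordinate bounds then controls $\|\mathbf{S}\bm{\Upsilon}\|_F^2$, and since $\tau\to 1$ for well-connected graphs this exhibits the claimed $\mathcal{O}(\log n/n)$ shrinkage of the aggregated noise.

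I expect the main obstacle to be the reduction in the second step: one must carefully identify the variance proxy of each mixed coordinate $[\mathbf{S}\bm{\Upsilon}]_{il}$ with the squared $\ell_2$-norm of a row of $\mathbf{S}$, and then bound this uniformly by $\tau\beta^2/n$ using the row-sum identity and the definition of $\tau$ --- this is exactly where the higher-order connectivity factor enters and where the $1/n$ gain originates. A secondary subtlety is calibrating $t$ and the union-bound constants so that the failure probability collapses to the clean form $1/d$; once the sub-Gaussian tail and the row-norm bound are in place, the remaining concentration argument is routine (one could alternatively apply a Hanson--Wright-type bound to the quadratic forms $\bm{\Upsilon}_{\cdot l}^{\top}\mathbf{S}^{\top}\mathbf{S}\bm{\Upsilon}_{\cdot l}$, but the per-entry route matches the stated constants more directly).
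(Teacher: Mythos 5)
Your argument follows essentially the same path as the paper's proof: you expand each entry as $[\mathbf{S}\bm{\Upsilon}]_{il}=\sum_{j}[\mathbf{S}]_{ij}[\bm{\Upsilon}]_{jl}$, use sub-Gaussianity of linear combinations (the paper invokes the General Hoeffding inequality, which is the same fact), identify the variance proxy with $\sigma^2\sum_j[\mathbf{S}]_{ij}^2\le\sigma^2\tau\beta^2/n$ via the row-sum identity and the definition of $\tau$, and finish with a union bound. Your per-entry computation is correct: with $u=t\beta^2\sigma^2$ it shows that every squared entry of $\mathbf{S}\bm{\Upsilon}$ is at most the right-hand side of the theorem with probability at least $1-n^{-3}$ (the paper's bookkeeping differs only in that its union-bound constants make the failure probability exactly $1/d$, whereas yours needs the side condition $d\le n^3$ to dominate $1-1/d$).

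The genuine gap is in your last step. Summing the $nd$ per-coordinate bounds, each of size $u$, gives $\left\|\mathbf{S}\bm{\Upsilon}\right\|_F^2\le nd\cdot u$, which is $nd$ times the stated bound, so the theorem's conclusion does not follow from what you have established. The paper sidesteps this factor by writing $\mathbb{P}\left(\left\|\mathbf{S}\bm{\Upsilon}\right\|_F^2\ge t\right)\le\sum_{i,j}\mathbb{P}\left(\left\|\mathbf{S}\bm{\Upsilon}\right\|_{\infty,\infty}\ge\sqrt{t}\right)$, i.e., by treating the event $\{\|\mathbf{S}\bm{\Upsilon}\|_F^2\ge t\}$ as though it forced some single entry to exceed $\sqrt{t}$; that inclusion is itself not valid (the Frobenius norm can exceed $\sqrt{t}$ while every entry is far smaller), so the quantity actually controlled by both your argument and the paper's is $\max_{i,l}[\mathbf{S}\bm{\Upsilon}]_{il}^2=\|\mathbf{S}\bm{\Upsilon}\|_{\infty,\infty}^2$ rather than $\|\mathbf{S}\bm{\Upsilon}\|_F^2$. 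To genuinely obtain the Frobenius norm you would have to either accept the extra factor of $nd$ or carry out the Hanson--Wright-type bound on $\bm{\Upsilon}_{\cdot l}^{\top}\mathbf{S}^{\top}\mathbf{S}\bm{\Upsilon}_{\cdot l}$ that you mention only as an aside; to match the paper's stated constants, present your result as a bound on the largest squared entry, which is what the paper's proof actually delivers.
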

Proof can be found in Appendix~\ref{appendix:upper bound with hoeffding}. Theorem~\ref{upper bound of aggregated noised matirx} implies that the norm of the aggregated noise matrix $\mathbf{S}\bm{\Upsilon}$ is bounded by three terms: the number of nodes of a graph $n$, the expansion order $K$, the higher-order graph connectivity factor $\tau$. We provide the intuition behind Theorem~\ref{upper bound of aggregated noised matirx} by viewing $\mathbf{S}\bm{\Upsilon}$ as sampling from the noise probability distribution. Each row of $\mathbf{S}$ represents the weighted average of these noise samples. According to the law of large numbers, each row of $\mathbf{S}\bm{\Upsilon}$ will converge to the expected value ($\mu=0$) of the noise distribution, thus reducing the non-predictive stochasticity of the weighted average noise. The denoising effect of $\mathbf{S}\bm{\Upsilon}$ depends on the number of samples (i.e., $K$ ) and the weights $\mathbf{S}_{i j}$, both of which are influenced by the depth of GNNs and the graph structure. We provide an illustration in Appendix~\ref{appendix:illustration}.

\subsubsection{Architecture}
For low node-degree graphs, it is impossible to increase the node degree; however, introducing the additional term $\frac{\varepsilon\mathbf{X}\mathbf{X}^{\top}}{\left\|\mathbf{X}\mathbf{X}^{\top}\right\|_{F}}$ can add additional graph connectivity, thereby decreasing the higher-order graph connectivity factor. Thus, we employ the second option in Eq.~\eqref{eq:options} for low node-degree noisy graphs. Furthermore, considering the noise comes from a Gaussian distribution, the magnitude of the added noise values can be unpredictable. Nonetheless, the term $\left\|\mathbf{X}\mathbf{X}^{\top}\right\|_{F}$ can penalize the weights when noise values are excessively large, thereby mitigating the adverse effect of the noise. Note that the weights of edges also determine the factor as shown in Eq.~\eqref{evenness factor}. For high node-degree graphs, we can reassign the weights of edges through the additional term to decrease the factor as shown in the third option. In the experiment section, we will demonstrate the effectiveness of our GADC (II, III) in dealing with noise in node features for both types of graphs.

\subsection{Improving Performance on Heterophilic Graphs}
\label{sec:heterophilic graph}
The aggregation scheme in message-passing-based GNNs is generally considered harmful for learning node representations on heterophilic graphs. To alleviate the negative impact of inconsistent edges on heterophilic graphs, we employ the first option in Eq.~\eqref{eq:options} to reduce the weights of the Laplacian matrix coefficients. Thus, the additional term, $\varepsilon\mathbf{\Phi}$, can obstruct the graph smoothing process. In our subsequent ablation study, we will demonstrate our GADC (I) can improve performance on heterophilic graphs. 
\subsection{Decoupling Feature Aggregation from Downstream Training}
GADC is formulated as Eq.~\eqref{eq:graph adversarial diffusion}, with four options presented in Eq.~\eqref{eq:options} and Eq.~\eqref{eq:fourth}. Depending on the scenario, we first select the appropriate option and then calculate $\Phi_{i j}$ using the feature matrix $\mathbf{X}$. This allows us to obtain the graph diffusion matrix $\mathbf{S}$. We then multiply the graph diffusion matrix by the feature matrix $\mathbf{X}$ to get the aggregated features $\mathbf{S}\mathbf{X}$. Finally, we use the aggregated features as input to train a linear model or an MLP. This approach effectively decouples feature aggregation from downstream training. The overall process is illustrated in Algorithm~\ref{alg:framework}.
\paragraph{Differences from APPNP and GNNGuard.} For the fourth option, our algorithm is a pre-computation method that uses the additional term to reconstruct the graph structure once. After reconstructing the adjacency matrix, we use it directly to aggregate node features. GNNGuard evaluates the importance of neighbors using the cosine similarity of node embeddings for each layer and normalizes the similarity. Therefore, the features used for computing similarity differ between GADC (I) and the higher layers ($\geq 2$) of GNNGuard. Additionally, our GADC differs from APPNP. APPNP performs a nonlinear transformation on the feature matrix and then uses the graph diffusion matrix to aggregate node features. In contrast, we introduce an additional term to create more adaptable GADC architectures.
\begin{algorithm}[t]
\caption{Graph Adversarial Diffusion Convolution} 
\label{alg:framework}
\begin{algorithmic}[1]
\STATE {\bfseries Input:} Adjacency matrix $\mathbf{A}$, feature matrix $\mathbf{X}$ 
\STATE {\bfseries Output:} Prediction $\mathbf{Z}$
\STATE Compute the transition matrix $\mathbf{T}$ using Eq.~\eqref{eq:options} based on the selected option.
\STATE Compute the graph adversarial diffusion matrix $\mathbf{S}$ via Eq.~\eqref{eq:graph adversarial diffusion}.
\STATE Compute the aggregated feature matrix $\mathbf{F}=\mathbf{S}\mathbf{X}$.
\WHILE{not convergence}
\STATE Compute the output of the forward model (Linear or MLP) based on the input $\mathbf{F}$: $\mathbf{Z}=f_{\Theta}(\mathbf{F})$.
\STATE Compute the supervised loss function $\mathcal{L}_{s}$.
\STATE Update the parameters $\Theta$ via gradient descent:
$\Theta = \Theta - \eta \nabla_\Theta (\mathcal{L}_{s})$.
\ENDWHILE
\STATE Predict via $\mathbf{Z} = f_{\Theta}(\mathbf{F})$.
\end{algorithmic}
\end{algorithm}

\section{Experiments}
\label{sec:experiments}
\begin{table*}[t]
\caption{Summary of test accuracy (\%) results from 100 runs under non-adaptive adversarial attacks.}
\vspace*{-0.2in}
\label{tab:metaattack}
\begin{center}
 \scriptsize
 \resizebox{\linewidth}{!}{
 \setlength{\tabcolsep}{4pt}
\begin{tabular}{ccccccccccc}
\toprule
\multirow{2}{*}{Ptb Rate (\%)} & \multicolumn{3}{c}{Cora} & \multicolumn{3}{c}{Citeseer} & \multicolumn{3}{c}{Pubmed}\\
\cmidrule(r){2-4} \cmidrule(r){5-7} \cmidrule(r){8-10} 
& 25 & 50 & 75 & 25 & 50 & 75 & 25 & 50 & 75  \\

\midrule
GCN & 54.09 $\pm$ 1.48 & 33.55 $\pm$ 2.63 & 22.94 $\pm$ 3.06 &  56.21 $\pm$ 1.26 & 42.41 $\pm$ 2.82 & 32.00 $\pm$ 3.08 & 43.29 $\pm$ 4.75 & 35.91 $\pm$ 3.91 & 35.33 $\pm$ 4.31 \\
GAT & 57.70 $\pm$ 1.30 & 37.76 $\pm$ 3.21 & 20.01 $\pm$ 3.45 & 59.15 $\pm$ 1.03 & 46.74 $\pm$ 3.06 & 37.25 $\pm$ 1.32 & 30.37 $\pm$ 2.15 & 28.08 $\pm$ 3.80 & 26.09 $\pm$ 4.53 \\
APPNP & 56.08 $\pm$ 1.06 & 33.23 $\pm$ 1.48 & 15.61 $\pm$ 0.72 & 60.15 $\pm$ 1.07 & 50.95 $\pm$ 1.91 & 40.15 $\pm$ 1.14 & 38.36 $\pm$ 3.73 & 39.94 $\pm$ 0.00 & 39.94 $\pm$ 0.00  \\
S$^2$GC & 56.02 $\pm$ 0.03 & 31.61 $\pm$ 0.03 & 20.89 $\pm$ 0.57 & 50.18 $\pm$ 0.00 & 34.12 $\pm$ 0.00 & 25.49 $\pm$ 0.03 & 31.59 $\pm$ 2.44 & 39.93 $\pm$ 0.02 & 39.94 $\pm$ 0.00 \\
NAGphormer & 62.11 $\pm$ 1.95 & 41.31 $\pm$ 2.41 & 28.84 $\pm$ 1.36 & 64.19 $\pm$ 1.01 & 56.29 $\pm$ 1.00 & 46.58 $\pm$ 1.15 & 71.49 $\pm$ 1.43 & 45.26 $\pm$ 2.58 & 36.76 $\pm$ 5.64\\
Robust-GCN & 56.23 $\pm$ 0.70 & 36.87 $\pm$ 1.33 & 27.25 $\pm$ 2.23 & 56.67 $\pm$ 0.59 & 41.95 $\pm$ 0.94 & 30.69 $\pm$ 1.45 & 32.43 $\pm$ 1.40 & 33.42 $\pm$ 2.36 & 33.44 $\pm$ 2.58 \\
GCN-Jaccard & 65.70 $\pm$ 1.03 & 46.31 $\pm$ 2.25 & 32.62 $\pm$ 1.31 & 60.28 $\pm$ 1.09 & 47.73 $\pm$ 1.66 & 38.68 $\pm$ 2.66 & 43.79 $\pm$ 4.75 &  35.89 $\pm$ 3.92 & 35.31 $\pm$ 4.31 \\
GCN-SVD & 58.27 $\pm$ 0.97 & 36.49 $\pm$ 1.86 & 25.13 $\pm$ 3.23 & 66.97 $\pm$ 0.84 & 59.21 $\pm$ 0.91 & 40.69 $\pm$ 1.10 & 78.69 $\pm$ 0.50 & 55.05 $\pm$ 0.93 & 36.31 $\pm$ 2.44 \\
Pro-GNN & 72.21 $\pm$ 1.89 & 38.86 $\pm$ 0.78 & 24.34 $\pm$ 3.06 & 67.18 $\pm$ 1.36 & 50.80 $\pm$ 2.05 & 31.60 $\pm$ 2.19  &  - &  -  &  -  \\
GNNGuard & 54.43 $\pm$ 1.45 & 34.48 $\pm$ 2.45 & 25.98 $\pm$ 3.31 & 55.84 $\pm$ 1.68 & 41.23 $\pm$ 1.85 & 31.80 $\pm$ 1.62 & 44.56 $\pm$ 2.91 & 37.56 $\pm$ 4.28 & 37.45 $\pm$ 4.92\\
Elastic GNN & 57.95 $\pm$ 3.28 & 45.18 $\pm$ 1.90 & 30.30 $\pm$ 2.91 & 64.24 $\pm$ 1.03 & 53.19 $\pm$ 2.74 & 42.96 $\pm$ 1.97 & 54.45 $\pm$ 0.60 & 39.94 $\pm$ 0.00 & 39.94 $\pm$ 0.00 \\
STABLE & \textbf{78.69 $\pm$ 0.50} & \textbf{71.94 $\pm$ 0.69} & 62.98 $\pm$ 1.03 & 70.49 $\pm$ 0.95 & 64.04 $\pm$ 2.07 & 52.03 $\pm$ 3.16 & 33.68 $\pm$ 5.38 & 36.57 $\pm$ 5.54 & 35.98 $\pm$ 6.37 \\
EvenNet & 76.30 $\pm$ 0.39 & 71.11 $\pm$ 0.46 & 67.07 $\pm$ 0.60 & 70.89 $\pm$ 0.71 & 66.20 $\pm$ 0.87 & 63.60 $\pm$ 1.71 & 83.98 $\pm$ 0.00 & 81.52 $\pm$ 0.55 & 81.27 $\pm$ 0.46\\
GCN-GARNET & 74.80 $\pm$ 1.19 & 70.90 $\pm$ 1.19 & \textbf{67.49 $\pm$ 1.06} & 70.01 $\pm$ 0.96 & 63.76 $\pm$ 1.99 & 56.64 $\pm$ 2.88 & 85.14 $\pm$ 0.34 & 84.82 $\pm$ 0.47 & 84.74 $\pm$ 0.42 \\
HANG-quad & 67.54 $\pm$ 1.03 & 65.22 $\pm$ 1.07 & 61.88 $\pm$ 1.50 & 66.37 $\pm$ 0.95 & 65.31 $\pm$ 0.93 & 63.68 $\pm$ 1.32 & 85.03 $\pm$ 0.20 & 85.06 $\pm$ 0.20 & 84.89 $\pm$ 0.17\\
\midrule
GADC (IV) & 76.00 $\pm$ 0.61 & 70.29 $\pm$ 0.82 & 66.06 $\pm$ 0.66 & \textbf{71.55 $\pm$ 0.89} & \textbf{66.47 $\pm$ 0.99} & \textbf{65.25 $\pm$ 0.69} & \textbf{86.74 $\pm$ 0.21} & \textbf{85.92 $\pm$ 0.14} & \textbf{85.29 $\pm$ 0.06} \\
\bottomrule
\end{tabular}}
\end{center}
\end{table*}
In this section, we conduct extensive experiments to demonstrate the effectiveness of our proposed GADC architectures in various scenarios.

\subsection{Evaluation of Defense against Adversarial Attacks on the Graph Structure}
\label{sec:defense_attack}
In this section, we demonstrate the defense performance of our proposed GADC (IV) against adversarial attacks on the graph structure, including non-adaptive and adaptive attacks~\citep{mujkanovic2022defenses}, by utilizing the additional term in the modified transition matrix. We conduct experiments on three citation network datasets~\citep{sen2008collective}: Cora, Citeseer, and Pubmed. The statistics for all datasets used in this paper can be found in Appendix~\ref{appendix:dataset}. For non-adaptive attacks, we use Mettack~\citep{zugner2019adversarial}, and for adaptive attacks, we use Aux-Attack.
\begin{table}
 \centering
  \caption{Summary of test accuracy (\%) results from one run under adaptive adversarial attacks.}
  \vspace*{-0.1in}
  \label{tab:adaptive}
\resizebox{\linewidth}{!}{
\setlength{\tabcolsep}{3pt}
\begin{tabular}{ccccc}
        \toprule
        \multirow{2}{*}{Dataset} & \multicolumn{2}{c}{Evasion Test Accuracy} & \multicolumn{2}{c}{Poisoned Test Accuracy}  \\
        \cmidrule(r){2-3} \cmidrule(r){4-5} 
        & Cora &  Citeseer  &  Cora  &  Citeseer    \\
        \midrule
        GCN & 59.71 & 62.78 & 48.99 & 46.98   \\
        SVD-GCN & 57.44 & 58.25 & 45.79 & 49.41   \\
        GNNGuard & 66.35 & 66.50 & 51.07 & 49.70  \\
        Soft-Median-GDC & 67.05 & 63.88 & 56.81 & 58.18 \\
        \midrule
        GADC (IV) & \textbf{72.03} & \textbf{72.94} & \textbf{70.73} & \textbf{68.36}   \\
        \bottomrule
        \end{tabular}}
\end{table}
\paragraph{Baselines.} For the baselines on non-adaptive attacks, we use two popular GNNs: GCN~\citep{kipf2017semi} and GAT~\citep{velivckovic2018graph}; two GDCs: APPNP~\citep{klicpera2019predict} and S$^2$GC~\citep{zhu2021simple}; a graph transformer: NAGphormer~\citep{chen2023nagphormer}; and various defense methods against Meta-attack~\citep{zugner2019adversarial}, including RobustGCN~\citep{zugner2019robustgcn}, GCN-Jaccard~\citep{wu2019adversarial}, GCN-SVD~\citep{entezari2020all}, Pro-GNN~\citep{jin2020graph}, GNNGuard~\citep{zhang2020gnnguard}, Elastic GNN~\citep{liu2021elastic}, STABLE~\citep{li2022reliable}, EvenNet~\citep{lei2022evennet}, GCN-GARNET~\citep{deng2022garnet}, and HANG-quad~\citep{zhao2023adversarial}. For the baselines on adaptive attacks, we consider GCN, GCN-SVD, GNNGuard, and Soft-Median-GDC~\citep{geisler2021robustness}. 

\paragraph{Setting.} 
For non-adaptive attacks, we use DeepRobust~\citep{li2020deeprobust} to generate disrupted graph structures on Cora, Citeseer, and Pubmed datasets with perturbation rates of $0.25,0.5$, and 0.75. We follow dataset splits used in~\citet{jin2020graph}. Each experiment is repeated 10 times, and we report the mean test accuracy and its standard deviation on the node classification task. For our GADC (IV) model, we add a 2-layer MLP with 32 hidden units after feature aggregation. We tune hyper-parameters with the validation dataset. We also tune some baselines such as HANG-quad, GCN-GARNET, and EvenNet. All hyper-parameter details of our methods can be found in Appendix~\ref{appendix:hyperparameter}. For adaptive attacks, we set the attack budget to 1000 for the Cora and Citeseer datasets and use dataset splits from~\citet{mujkanovic2022defenses}. The experiment is repeated once. For our model, we set $K=2$ and discard the terms for $k=0/1$, similar to SGC~\citep{wu2019simplifying}. 
\begin{table*}[t]
\caption{Summary of test accuracy (\%) results from 100 runs on citation network datasets with Gaussian noise.}
\vspace*{-0.1in}
\centering
\resizebox{\linewidth}{!}{%
\setlength{\tabcolsep}{10pt}
\begin{tabular}{@{}c|cccccccc|c@{}}
\hline
\toprule
Cora & MLP & GCN & GAT & APPNP & GLP & S$^2$GC & IRLS & AirGNN & GADC (II)  \\
\midrule
0.1 & 41.0 $\pm$ 9.1 & 53.5 $\pm$ 25.1 & 73.9 $\pm$ 8.7 & 78.1 $\pm$ 8.7 & 65.5 $\pm$ 13.9 & 74.0 $\pm$ 10.5 & 65.8 $\pm$ 12.9 & \textbf{78.8 $\pm$ 6.9} & 77.4 $\pm$ 2.5\\
0.2 & 21.6 $\pm$ 6.5 & 41.3 $\pm$ 20.7 & 62.8 $\pm$ 12.7 & 72.1 $\pm$ 10.3 & 55.4 $\pm$ 12.4 & 65.4 $\pm$ 12.1 & 47.8 $\pm$ 8.9 & \textbf{73.6 $\pm$ 9.4}  & 72.6 $\pm$ 2.8 \\
0.3 & 17.5 $\pm$ 6.4 & 32.8 $\pm$ 13.6 & 51.2 $\pm$ 14.1 & 66.3 $\pm$ 12.4 & 49.9 $\pm$ 11.1 & 60.8 $\pm$ 10.9 & 42.3 $\pm$ 7.7 & 68.5 $\pm$ 11.5 & \textbf{69.1 $\pm$ 3.2} \\
0.4 & 15.9 $\pm$ 6.0 & 30.0 $\pm$ 10.0 & 45.4 $\pm$ 12.3 & 62.3 $\pm$ 12.9 & 49.4 $\pm$ 10.0 & 55.9 $\pm$ 11.8 & 40.9 $\pm$ 7.3 & 65.6 $\pm$ 11.3 & \textbf{68.0 $\pm$ 3.6} \\
0.5 & 14.9 $\pm$ 5.2 & 28.1 $\pm$ 9.1 & 41.9 $\pm$ 13.3 & 63.0 $\pm$ 11.8 & 47.3 $\pm$ 11.7 & 53.4 $\pm$ 11.4 & 41.4 $\pm$ 7.3 & \textbf{67.6 $\pm$ 8.0} & \textbf{67.6 $\pm$ 3.3}  \\
100 & 15.0 $\pm$ 4.6 & 28.4 $\pm$ 7.5 & 31.9 $\pm$ 12.9 & 61.8 $\pm$ 11.2 & 47.6  $\pm$ 10.3 & 52.0 $\pm$ 13.1 & 43.7 $\pm$ 6.6 & 65.4 $\pm$ 11.4 & \textbf{66.9 $\pm$ 5.3}  \\
\midrule
Citeseer & MLP & GCN & GAT & APPNP & GLP & S$^2$GC & IRLS & AirGNN & GADC (II) \\
\midrule
0.1 & 46.3 $\pm$ 3.1 & 52.4 $\pm$ 21.9 & 69.5 $\pm$ 1.1 & 70.3 $\pm$ 1.0 & 65.3 $\pm$ 3.0 & 71.7 $\pm$ 1.1 & 70.8 $\pm$ 3.3 & \textbf{70.9 $\pm$ 1.3} & 69.6 $\pm$ 1.2 \\
0.2 & 25.0 $\pm$ 5.2 & 37.3 $\pm$ 15.9 & 55.1 $\pm$ 10.4 & \textbf{59.6 $\pm$ 9.6} & 47.2 $\pm$ 8.4 & 59.5 $\pm$ 7.3 & 56.0 $\pm$ 7.2 & 58.9 $\pm$ 13.9 & 59.5 $\pm$ 3.5 \\
0.3 & 17.9 $\pm$ 3.2 & 24.4 $\pm$ 4.4 & 36.2 $\pm$ 9.8 & 45.9 $\pm$ 11.7 & 36.4 $\pm$ 7.1 & 46.6 $\pm$ 8.9 & 37.2 $\pm$ 7.0 & 44.2 $\pm$ 14.9 & \textbf{50.5 $\pm$ 3.1} \\
0.4 & 17.4 $\pm$ 3.0 & 23.3 $\pm$ 4.4 & 30.9 $\pm$ 7.1 & 40.8 $\pm$ 10.4 & 36.7 $\pm$ 4.7 & 42.7 $\pm$ 6.7 & 36.3 $\pm$ 4.9 & 39.8 $\pm$ 12.3 & \textbf{48.3 $\pm$ 2.3}  \\
0.5 & 17.4 $\pm$ 2.4 & 22.7 $\pm$ 4.0 & 28.7 $\pm$ 6.2 & 39.5 $\pm$ 8.8 & 36.4 $\pm$ 4.8 & 41.3 $\pm$ 6.8 & 36.7 $\pm$ 5.0 & 36.9 $\pm$ 12.1 & \textbf{47.8 $\pm$ 2.4} \\
100 & 16.8 $\pm$ 2.5 & 23.7 $\pm$ 3.7 & 25.0 $\pm$ 6.5 & 36.6 $\pm$ 9.4 & 38.0 $\pm$ 4.0 & 41.4 $\pm$ 5.3 & 37.3 $\pm$ 4.8 & 33.4 $\pm$ 11.4 & \textbf{46.9 $\pm$ 2.4} \\
\midrule
Pubmed & MLP & GCN & GAT & APPNP & GLP & S$^2$GC & IRLS & AirGNN & GADC (II) \\
\midrule
0.01 & 67.9 $\pm$ 1.4 & 61.1 $\pm$ 18.2 & 77.7 $\pm$ 0.9 & \textbf{80.0 $\pm$ 0.7} & 78.6 $\pm$ 0.9 & 79.2 $\pm$ 0.5 & 81.2 $\pm$ 0.8 & 79.9 $\pm$ 0.4 & 77.4 $\pm$ 0.8 \\
0.02 & 58.3 $\pm$ 2.0 & 61.0 $\pm$ 17.4 & 75.8 $\pm$ 1.3 & 78.3 $\pm$ 1.1 & 76.5 $\pm$ 1.1 & 78.0 $\pm$ 0.8 & 78.4 $\pm$ 1.1 & \textbf{79.3 $\pm$ 0.7} & 76.4 $\pm$ 1.0 \\
0.03 & 47.9 $\pm$ 4.1 & 54.0 $\pm$ 14.2 & 60.7 $\pm$ 13.6 & 74.4 $\pm$ 4.9 & 70.3 $\pm$ 5.7 & 74.4 $\pm$ 2.5 & 69.4 $\pm$ 6.9 & \textbf{76.2 $\pm$ 4.8} & 73.9 $\pm$ 1.6 \\
0.04 & 39.4 $\pm$ 4.8 & 41.7 $\pm$ 9.7 & 40.0 $\pm$ 7.2 & 64.1 $\pm$ 14.4 & 60.8 $\pm$ 8.4 & 62.7 $\pm$ 11.2 & 58.1 $\pm$ 8.7 & 65.9 $\pm$ 13.3 & \textbf{69.0 $\pm$ 2.3}  \\
0.05 & 36.5 $\pm$ 6.6 & 36.9 $\pm$ 6.7 & 38.4 $\pm$ 6.3 & 55.3 $\pm$ 15.9 & 56.9 $\pm$ 7.6 & 55.8 $\pm$ 10.9 & 55.9 $\pm$ 9.7 & 58.9 $\pm$ 14.3 & \textbf{66.3 $\pm$ 2.9}  \\
100 & 35.0 $\pm$ 5.3 & 36.6 $\pm$ 5.3 & 38.2 $\pm$ 3.7 & 52.3 $\pm$ 11.3 & 55.1 $\pm$ 5.6 & 54.0 $\pm$ 10.3 & 50.1 $\pm$ 10.6 & 55.6 $\pm$ 13.6 & \textbf{62.5 $\pm$ 3.6}  \\
\bottomrule
\end{tabular}
}
\label{table:denoising}
\end{table*}

\begin{table}
 \centering
  \caption{Summary of test accuracy (\%) results from 10 runs on the Coauthor-CS and Coauthor-Phy datasets with Gaussian noise.}
  \vspace*{-0.1in}
  \label{tab:coauthor}
\resizebox{\linewidth}{!}{
\setlength{\tabcolsep}{8pt}
\begin{tabular}{ccccc}
        \toprule
        \multirow{2}{*}{Noise Level} & \multicolumn{2}{c}{Coauthor-CS} & \multicolumn{2}{c}{Coauthor-Phy}  \\
        \cmidrule(r){2-3} \cmidrule(r){4-5} 
        & 0.1 &  1  &  0.1  &  1    \\
        \midrule
        MLP & 82.5{$\pm$1.8} & 22.3{$\pm$0.1} & 81.6{$\pm$8.1} & 47.0{$\pm$10.0}  \\
        GCN & 87.3{$\pm$0.5} & 61.3{$\pm$14.3} & 94.2{$\pm$0.4} & 78.6{$\pm$10.6}   \\
        GAT & 86.8{$\pm$3.6} & 57.9{$\pm$20.2} & 94.0{$\pm$0.4} & 63.7{$\pm$16.7}  \\
        APPNP & 94.5{$\pm$0.4} & 81.7{$\pm$2.0} & 95.4{$\pm$0.3} & 89.2{$\pm$1.6}  \\
        GLP & 91.3{$\pm$0.4} & 52.4{$\pm$17.3} & 93.3{$\pm$2.5} & 81.3{$\pm$10.6}  \\
        S$^2$GC & 86.1{$\pm$0.2} & 79.6{$\pm$10.2} & 92.6{$\pm$1.3} & 89.4{$\pm$4.3}  \\
        IRLS & 78.8{$\pm$5.1} & 62.1{$\pm$17.8} & 89.2{$\pm$3.4} & 87.0{$\pm$4.5}  \\
        \midrule
        GADC (II) & \textbf{95.4{$\pm$}0.2} & \textbf{87.8{$\pm$}1.5} & \textbf{95.7{$\pm$}0.2} & \textbf{93.6{$\pm$}0.8}  \\
        \bottomrule
        \end{tabular}}
\end{table}
\paragraph{Results.}
Table~\ref{tab:metaattack} reports the results of non-adaptive attacks for our method and other baselines. Our method outperforms other baselines on Citeseer and Pubmed and achieves performance relatively close to the best baseline on Cora. Additionally, our GADC (IV) significantly outperforms both APPNP and S$^2$GC. As the perturbation rate increases, the adjacency matrix becomes increasingly unreliable. However, our additional term leverages the inherent homophily properties of these graphs to reconstruct the clean adjacency matrix, thereby restoring effective information flow within the aggregation scheme. GNNGuard and NAGphormer also evaluate the importance of neighbors using the attention mechanism on node embeddings at each layer to mitigate the disruption of the graph structure caused by adversarial attacks. In contrast, our algorithm is a pre-computation method that uses the additional term to reconstruct the graph structure directly in one step. After reconstructing the adjacency matrix, we use it directly to aggregate features. We think that smoothing node features makes the recalculated weights less accurate. Therefore, our method performs much better than GNNGuard and NAGphormer.

Table~\ref{tab:adaptive} reports the experimental results of our method and the baselines under adaptive attacks. The results demonstrate that our method significantly outperforms the baselines. This indicates that even when adaptive attacks modify the graph structure during inference, our additional terms can still effectively reconstruct the adjacency matrix and provide a robust defense.
\subsection{Evaluation of Denoising against Noise in Node Features}
\label{sec:denoising}
In this section, we compare the denoising performance of our proposed GADC (II, III) with various baselines by evaluating their test accuracy when models are trained on noisy feature matrices.
\begin{table}
\centering
\caption{Summary of test accuracy (\%) results from 10 runs on the ogbn-products dataset with Gaussian noise.}
\vspace*{-0.1in}
\label{tab:ogb}
\resizebox{\linewidth}{!}{\setlength{\tabcolsep}{20pt}
\begin{tabular}{ccc}
    \toprule
    \multirow{2}{*}{Noise Level} & \multicolumn{2}{c}{ogbn-products}  \\
    \cmidrule(r){2-3}
     & 0.1 & 1   \\
    \midrule
    MLP & 59.68{$\pm$0.16}  & 38.08{$\pm$0.10} \\
    GCN & 75.60{$\pm$0.19} & 72.76{$\pm$0.20}  \\
    S$^2$GC &  74.95{$\pm$0.13}  & 63.17{$\pm$0.12}  \\
    \midrule
    GADC (III) & \textbf{77.54{$\pm$}0.15}  & \textbf{73.66{$\pm$}0.13}  \\
    \bottomrule
    \end{tabular}}
\end{table}
\paragraph{Datasets.} For our experiments, we use three small-scale graph datasets: Cora, Citeseer, and Pubmed, and three large-scale graph datasets: Coauthor-CS, Coauthor-Phy~\citep{shchur2018pitfalls}, and ogbn-products~\citep{hu2020open}. We follow dataset splits used in~\citep{yang2016revisiting} for the citation network datasets. For the Coauthor datasets, we split the nodes into 60\% for training, 20\% for validation, and 20\% for testing. For the ogbn-products dataset, we follow dataset splits provided by OGB~\citep{hu2020open}.
\begin{table*}[t]
\normalsize
\begin{center}
\caption{Summary of ablation study results in terms of test accuracy (\%).}
 \vspace*{-0.1in}
\label{ablation denoising}
\resizebox{\linewidth}{!}{
\setlength{\tabcolsep}{8pt}
\begin{tabular}{ccccccccc}
\toprule
\multicolumn{1}{c}{Dataset}& \multicolumn{2}{c}{Cora} & \multicolumn{2}{c}{Coauthor-CS} & \multicolumn{2}{c}{Coauthor-Phy} & \multicolumn{2}{c}{ogbn-products}  \\
\cmidrule(r){1-1} \cmidrule(r){2-3} \cmidrule(r){4-5} \cmidrule(r){6-7} \cmidrule(r){8-9} 
Noise Level&  0.1     &  0.5  &   0.1 & 1.0
&  0.1     &  1.0  &   0.1 & 1.0\\
\midrule
GADC (II/III, $\varepsilon=0$)  &76.4 $\pm$ 3.2 & 66.5 $\pm$ 5.1 & 95.3 $\pm$ 0.2   & 87.1 $\pm$ 3.1  & 95.7 $\pm$ 0.2 & 93.1 $\pm$ 1.4  & 77.5 $\pm$ 0.2  & 73.4 $\pm$ 0.1    \\
GADC (II/III, $\varepsilon\neq 0$)  & 77.4 $\pm$ 2.5 & 67.6 $\pm$ 3.3 & 95.4 $\pm$ 0.2   &87.8 $\pm$ 1.5  & 95.7 $\pm$ 0.2  & 93.6 $\pm$ 0.8  & 77.5 $\pm$ 0.2  &73.7 $\pm$ 0.1   \\
\bottomrule
\end{tabular}
}
\end{center}
\end{table*}
\paragraph{Baselines.}
For the baselines, we consider several variants of GDC, including APPNP~\citep{klicpera2019predict}, GLP~\citep{li2019label}, S$^2$GC~\citep{zhu2021simple}, and AirGNN~\citep{liu2021graph}. Additionally, we include popular GNNs such as GCN~\citep{kipf2017semi} and GAT~\citep{velivckovic2018graph}. We also consider IRLS~\citep{yang2021graph}, a GNN architecture derived from GSD, and an MLP, which does not employ any aggregation scheme. 

\paragraph{Setting.}
We assume that the original feature matrix is clean and devoid of noise. To introduce noise, we synthesize it from a standard Gaussian distribution and add it to the original feature matrix, as suggested in AirGNN~\citep{liu2021graph}. After adding Gaussian noise, we apply row normalization to node features and train all models using these noisy feature matrices. The noise level $\xi$ controls the magnitude of the noise added to the feature matrix: $\mathbf{X}+\xi\bm{\Upsilon}$, where $\bm{\Upsilon}$ is sampled from a standard i.i.d. Gaussian distribution. For Cora and Citeseer, we test $\xi \in \{0.1, 0.2, 0.3, 0.4, 0.5, 100\}$, and for Pubmed, we test $\xi \in \{0.01, 0.02, 0.03, 0.04, 0.05, 100\}$. For Coauthor-CS, Coauthor-Phy, and ogbn-products, we test $\xi \in \{0.1, 1.0\}$. For each model's hyperparameters, we follow the settings reported in their original papers. To reduce randomness, we repeat the experiment 100 or 10 times and report the mean test accuracy. In each repeated run, different Gaussian noises are added; however, within the same run, the same noisy feature matrix is used to train all models. For the citation network datasets, we employ the second option in Eq.~\eqref{eq:options}, and we set $K=16$ and $\lambda=32$ by default. We select $K$ for other datasets based on the best performance on the validation dataset. For coauthor datasets, we use the second option. For ogbn-products, we use the third option and don't add additional graph connectivity since it is a very large graph. Therefore, we only compute $\Phi_{i j}$ based on connected neighbors in the original graph.

\paragraph{Results.}
Table~\ref{table:denoising}, \ref{tab:coauthor}, and \ref{tab:ogb} report test accuracy results across various noise levels for node classification tasks. As demonstrated in Table~\ref{table:denoising}, \ref{tab:coauthor}, and \ref{tab:ogb}, under high noise levels, the performance of MLP approximates random guessing, as the test accuracy is close to the inverse of the total number of labels. This implies that the added noise has effectively obscured the original features. Compared to GDC variants, our proposed GADC (II, III) demonstrates superior denoising performance at high noise levels. This demonstrates the effectiveness of the additional term in our modified transition matrix for tracking large noise. We also synthesize noise by flipping individual features with a small Bernoulli probability on three citation network datasets. We report the results in Appendix~\ref{appendix:result}.

\paragraph{Computational Complexity.} Introducing additional edges in GADC (II) does increase the time complexity of the aggregation process. However, we mitigate this issue by decoupling the aggregation process from downstream training, ensuring that aggregation occurs only once. This differs from APPNP, which repeats the aggregation process during each training iteration (i.e., aggregation occurs $n$ times for $n$ iterations). As a result, completing 100 runs (one experiment) of GADC (II) on the Pubmed dataset takes only 58 seconds, and completing 10 runs on the Coauthor-CS dataset takes 22 seconds. Without introducing additional edges ($\varepsilon=0$), 100 runs on the Pubmed dataset take only 41 seconds. In contrast, APPNP requires 232 seconds to complete 100 runs on the Pubmed dataset. This demonstrates the computational efficiency of our method. For the ogbn-products dataset, we maintain only the one-hop connections and adjust the edge weights, without introducing additional edges, resulting in a time complexity similar to that of GDC.

\subsection{Ablation Study on the Effectiveness of the Additional Term}
We conduct an ablation study by setting $\varepsilon$ to zero to observe its impact. As shown in Table~\ref{ablation denoising}, for the low node-degree graph (Cora), our additional term effectively enhances denoising performance under both small and large noise. However, for high node-degree graphs (Coauthor and ogbn-products), performance improvement is noticeable only under large noise. This is because low node-degree graphs have relatively fewer neighboring nodes, so introducing the additional term increases graph connectivity, effectively filtering out noise. In high node-degree graphs, the inherent connectivity is already strong enough under small noise, reducing the need for additional connectivity to filter out noise. However, under large noise, enhanced connectivity still improves denoising for Co-author. In the ogbn-products graph, although GADC (III, $\varepsilon \neq 0)$ doesn't introduce new connections, it reassigns the weights of existing edges, successfully reducing the value of $\tau$ and thereby improving denoising performance. 

\subsection{Improving Performance on Heterophilic Graphs}
\label{heterophilic graph}
We conduct an ablation study to show that our GADC (I) can improve performance on heterophilic graphs. Due to the space limit, we provide the experimental results in Appendix~\ref{appendix:heterophilic} and discuss related work in Appendix~\ref{appendix:related}.

\section{Conclusion}
We introduce a min-max formulation of the graph signal denoising problem and develop various GADC variants. Extensive results demonstrate that our GADC effectively addresses noisy features in graphs, graph structure attacks, and inconsistent edges on heterophilic graphs.

\section*{Acknowledgements}
We thank all the anonymous reviewers for their helpful comments and suggestions. Songtao Liu thanks ICML and NeurIPS so much for providing financial aid, making him attend ICML 2022/2023 and NeurIPS 2023 in person, as he was unable to get other funding sources for attendance (ICML 2023, NeurIPS 2023).

\section*{Impact Statement}
This paper presents work whose goal is to advance the field of 
Machine Learning. There are many potential societal consequences of our work, none which we feel must be specifically highlighted here.


\bibliography{example_paper}
\bibliographystyle{icml2024}

\newpage
\appendix
\onecolumn
\section{The Row Summation of the Graph Diffusion Convolution Matrix}
\label{appendix:row sum}
We provide the derivations of the row sum of $\mathbf{S}=\frac{1}{\lambda+1} \sum_{k=0}^K\left[\frac{\lambda}{\lambda+1} \widetilde{\bm{\mathcal{A}}}\right]^k$ in this section. Before we derive the row summation of $\mathbf{S}$, we first derive the row summation of $\widetilde{\bm{\mathcal{A}}}^{k}$. Note we consider $\widetilde{\bm{\mathcal{A}}}=\widetilde{\mathbf{D}}^{-1} \widetilde{\mathbf{A}}$ for the simplicity of the proof.
\begin{lemma}
\label{transition matrix}
Consider a probability matrix $\mathbf{P} \in \mathbb{R}^{n \times n}$, where $\mathbf{P}_{ij}\geq0$. Besides, for all $i$, we have $\sum_{j=1}^n\mathbf{P}_{ij}=1$. Then for any $k \in \mathbb{Z}_{+}$, we have $\sum_{j=1}^n\mathbf{P}^k_{ij}=1$,
\end{lemma}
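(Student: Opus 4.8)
\textbf{Proof plan for Lemma~\ref{transition matrix}.} The plan is to proceed by induction on $k$. The base case $k=1$ is immediate from the hypothesis $\sum_{j=1}^n \mathbf{P}_{ij}=1$. For the inductive step, assume that $\sum_{j=1}^n (\mathbf{P}^k)_{ij} = 1$ for every row $i$, and compute the row sum of $\mathbf{P}^{k+1} = \mathbf{P}^k \mathbf{P}$.

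The key computation is to expand the matrix product and swap the order of summation:
\begin{equation}
\sum_{j=1}^n (\mathbf{P}^{k+1})_{ij} = \sum_{j=1}^n \sum_{\ell=1}^n (\mathbf{P}^k)_{i\ell} \mathbf{P}_{\ell j} = \sum_{\ell=1}^n (\mathbf{P}^k)_{i\ell} \sum_{j=1}^n \mathbf{P}_{\ell j} = \sum_{\ell=1}^n (\mathbf{P}^k)_{i\ell} \cdot 1 = 1,
\end{equation}
where the third equality uses the hypothesis that every row of $\mathbf{P}$ sums to $1$, and the last equality uses the inductive hypothesis applied to row $i$ of $\mathbf{P}^k$. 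This closes the induction. (The nonnegativity $\mathbf{P}_{ij} \geq 0$ is not strictly needed for the row-sum identity itself, but it guarantees $\mathbf{P}^k$ stays a valid probability/stochastic matrix, which is the context in which the lemma is used.)

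I do not anticipate a genuine obstacle here — the statement is the standard fact that the product of row-stochastic matrices is row-stochastic, equivalently that $\mathbf{P}\mathbf{1} = \mathbf{1}$ implies $\mathbf{P}^k \mathbf{1} = \mathbf{1}$ by iterating. The only thing to be careful about is the bookkeeping of indices when swapping the two sums, and stating the induction cleanly. Once this lemma is in hand, the row sum of $\mathbf{S} = \frac{1}{\lambda+1}\sum_{k=0}^K \big(\frac{\lambda}{\lambda+1}\widetilde{\bm{\mathcal{A}}}\big)^k$ follows by linearity: each term $\widetilde{\bm{\mathcal{A}}}^k$ contributes row sum $1$, so the row sum of $\mathbf{S}$ is $\frac{1}{\lambda+1}\sum_{k=0}^K \big(\frac{\lambda}{\lambda+1}\big)^k$, which is the finite geometric series evaluating to $1 - \big(\frac{\lambda}{\lambda+1}\big)^{K+1}$, matching Eq.~\eqref{row sum}.
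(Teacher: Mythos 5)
Your proof is correct and follows essentially the same route as the paper: induction on $k$, expanding $\mathbf{P}^{k+1}=\mathbf{P}^k\mathbf{P}$, swapping the order of summation, and applying the row-sum hypothesis followed by the inductive hypothesis. Your use of $\ell$ as the inner summation index is actually cleaner than the paper's version, which reuses $k$ for both the exponent and the dummy index.
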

\begin{proof}
We give a proof by induction on $k$.\\
\textbf{Base case:} When $k=1$, the case is true.\\
\textbf{Inductive step:} Assume the induction hypothesis that for a particular $k$, the single case n = k holds, meaning $\mathbf{P}^k$ is true:
\begin{equation*}
    \forall i, \sum_{j=1}^n \mathbf{P}_{ij}^k =1.
\end{equation*}
As $\mathbf{P}^{k+1}=\mathbf{P}^{k}\mathbf{P}$, so we have
\begin{equation*}
\begin{split}
\sum_{j=1}^n\mathbf{P}^{k+1}_{ij}&=\sum_{j=1}^n\sum_{k=1}^n\mathbf{P}^{k}_{ik}\mathbf{P}_{kj} \\
&=  \sum_{k=1}^n\sum_{j=1}^n\mathbf{P}^{k}_{ik}\mathbf{P}_{kj}\\
&= \sum_{k=1}^n\mathbf{P}^{k}_{ik}\left(\sum_{j=1}^n\mathbf{P}_{kj}\right) \\&
= \sum_{k=1}^n\mathbf{P}^{k}_{ik} = 1,
\end{split} 
\end{equation*}
which finishes the proof.
\end{proof}

Lemma~\ref{transition matrix} describes the row summation of $\widetilde{\bm{\mathcal{A}}}^{k}$ is 1. Now we can obtain the row summation for $\mathbf{S}$.

Then for any $i$, we have
\begin{equation}
\begin{split}
    \sum_{j=1}^{n}\left[\mathbf{S}\right]_{ij} 
    &=\frac{1}{\lambda+1}\sum_{k=0}^{K} \left(\frac{\lambda}{\lambda+1}\left[\widetilde{\bm{\mathcal{A}}}\right]_{ij}\right)^{k}\\
    &=\frac{1}{\lambda+1}\sum_{k=0}^{K}\left(\frac{\lambda}{\lambda+1}\right)^{k} \\
    &=1-\left(\frac{\lambda}{\lambda+1}\right)^{K+1}.
\end{split}
\end{equation}

\section{Proof of Theorem 1}
\label{appendix:upper bound with hoeffding}
We first introduce the General Hoeffding Inequality~\citep{hoeffding1994probability}, which is essential for bounding $\left\|\mathbf{S}\bm{\Upsilon}\right\|_{F}^{2}$.

\begin{lemma}
\label{hoeffding}
\rm{\textbf{(General Hoeffding Inequality~\citep{hoeffding1994probability})}} \emph{Suppose that the variables $X_{1}, \cdots, X_{n}$ are independent, and $X_i$ has mean $\mu_{i}$ and sub-Gaussian parameter $\sigma_{i}$. Then for all $t\geq0$, we have}
\begin{equation}
 \mathbb{P}\left[\sum_{i=1}^{n}\left(X_{i}-\mu_{i}\right) \geq t\right] \leq \exp \left\{-\frac{t^{2}}{2 \sum_{i=1}^{n} \sigma_{i}^{2}}\right\}.
\end{equation}
\end{lemma}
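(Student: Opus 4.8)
The plan is to prove this concentration bound via the classical Chernoff / exponential-moment method, exploiting the sub-Gaussian moment generating function bound already recorded in the Noise Property proposition. First I would fix an arbitrary parameter $\lambda > 0$ and apply Markov's inequality to the exponentiated sum: since $x \mapsto e^{\lambda x}$ is monotone increasing, the event $\{\sum_{i=1}^n (X_i - \mu_i) \geq t\}$ coincides with $\{e^{\lambda \sum_{i=1}^n (X_i - \mu_i)} \geq e^{\lambda t}\}$, so Markov's inequality gives $\mathbb{P}[\sum_{i=1}^n (X_i - \mu_i) \geq t] \leq e^{-\lambda t}\, \mathbb{E}[e^{\lambda \sum_{i=1}^n (X_i - \mu_i)}]$.

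Next I would factorize the moment generating function using independence. Since $X_1, \dots, X_n$ are independent, the centered variables $X_i - \mu_i$ are independent as well, so the expectation of the product equals the product of expectations, $\mathbb{E}[e^{\lambda \sum_{i=1}^n (X_i - \mu_i)}] = \prod_{i=1}^n \mathbb{E}[e^{\lambda (X_i - \mu_i)}]$. The sub-Gaussian hypothesis then bounds each factor by $e^{\sigma_i^2 \lambda^2 / 2}$, exactly the inequality stated in the Noise Property proposition (applied here with a general parameter $\sigma_i$ rather than a common $\sigma$). Multiplying across $i$ yields $\mathbb{E}[e^{\lambda \sum_{i=1}^n (X_i - \mu_i)}] \leq \exp(\tfrac{1}{2} \lambda^2 \sum_{i=1}^n \sigma_i^2)$, and combining with the previous step gives the uniform bound $\mathbb{P}[\sum_{i=1}^n (X_i - \mu_i) \geq t] \leq \exp(-\lambda t + \tfrac{1}{2}\lambda^2 \sum_{i=1}^n \sigma_i^2)$, valid for every $\lambda > 0$.

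The final step is to optimize over the free parameter $\lambda$. Writing $S = \sum_{i=1}^n \sigma_i^2$ and minimizing the exponent $g(\lambda) = -\lambda t + \tfrac{1}{2} \lambda^2 S$ over $\lambda > 0$, the stationarity condition $g'(\lambda) = -t + \lambda S = 0$ gives $\lambda^\star = t/S$, which is admissible because $t \geq 0$; substituting back produces the minimal exponent value $g(\lambda^\star) = -t^2/(2S)$. Plugging $\lambda^\star$ into the uniform bound then yields the claimed inequality $\mathbb{P}[\sum_{i=1}^n (X_i - \mu_i) \geq t] \leq \exp(-t^2/(2\sum_{i=1}^n \sigma_i^2))$, with the boundary case $t = 0$ handled trivially since the right-hand side equals $1$. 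There is no serious obstacle here: the argument is entirely mechanical once the sub-Gaussian MGF bound is in hand, and the only points requiring a moment of care are checking that the optimal $\lambda^\star$ is nonnegative (so that the sub-Gaussian bound, though valid for all real $\lambda$, is evaluated at a legitimate Chernoff parameter) and that $S > 0$, which holds whenever at least one $\sigma_i > 0$.
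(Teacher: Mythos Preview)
Your argument is the standard and correct Chernoff-method proof of the sub-Gaussian Hoeffding bound: Markov on the exponentiated sum, factorization via independence, the per-coordinate sub-Gaussian MGF bound, and optimization in $\lambda$. Nothing is missing.

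The only thing to note in comparing with the paper is that the paper does not actually prove this lemma at all: it is stated with a citation to Hoeffding and then invoked as a black box in the proof of Theorem~\ref{upper bound of aggregated noised matirx}. So there is no ``paper's own proof'' to compare against; your write-up supplies exactly the textbook derivation that the citation points to, which is appropriate if a self-contained argument is desired.
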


Now, we prove Theorem 1.
\begin{proof}[Proof of Theorem 1.]
For any entry $\left[\mathbf{S}\bm{\Upsilon}\right]_{ij}=\sum_{p=1}^{n}\left(\mathbf{S}\right)_{ip}\bm{\Upsilon}_{pj}$, where $\bm{\Upsilon}_{pj}$ is a sub-Gaussian variable with parameter $\sigma^{2}$. By the General Hoeffding inequality~\ref{hoeffding}, we have
\begin{equation}
\begin{split}
&\mathbb{P}\left(\left|\left[ \frac{1}{\lambda+1}\sum_{k=0}^{K}\left(\frac{\lambda}{\lambda+1}\widetilde{\bm{\mathcal{A}}}\right)^{k}\bm{\Upsilon}\right]_{ij}\right|\geq t\right)\\
&\leq 2\exp \left\{-\frac{nt^{2}}{2\tau\left(1-\left(\frac{\lambda}{\lambda+1}\right)^{K+1}\right)^2\sigma^2}\right\},
\end{split}
\end{equation}
where $\tau = \max_i \tau_i$ and $ \tau_i = {n\sum_{j=1}^{n}\left[\mathbf{S}\right]_{ij}^2}\Bigg/{\left(1-\left(\frac{\lambda}{\lambda+1}\right)^{K+1}\right)^2}$. 

Applying union bound~\citep{vershynin2010introduction} to all possible pairs of $i \in [n]$, $j \in [n]$, we get
\begin{equation}
\begin{split}
 \mathbb{P}\left(\left\|\mathbf{S}\bm{\Upsilon}\right\|_{\infty, \infty}\geq t\right) &\leq \sum_{i,j}\mathbb{P}\left(\left[\mathbf{S}\bm{\Upsilon}\right]_{ij}\geq t\right)\\
 &\leq 2n^2\exp \left\{-\frac{nt^{2}}{2\tau\left(1-\left(\frac{\lambda}{\lambda+1}\right)^{K+1}\right)^2\sigma^2}\right\}. 
 \end{split}
\end{equation}
Applying union bound again, we have
\begin{equation}
\begin{split}
   & \mathbb{P}\left(\left\|\mathbf{S}\bm{\Upsilon}\right\|_{F}^{2}\geq t\right) \\ \leq & \sum_{i,j}\mathbb{P}\left(\left\|\mathbf{S}\bm{\Upsilon}\right\|_{\infty, \infty}\geq \sqrt{t}\right)\\
   \leq & 2n^4\exp \left\{-\frac{nt}{2\tau\left(1-\left(\frac{\lambda}{\lambda+1}\right)^{K+1}\right)^2\sigma^2}\right\}.
\end{split}
\end{equation}
Choose $t=2\tau\left(1-\left(\frac{\lambda}{\lambda+1}\right)^{K+1}\right)^2\left(4\log n+\log{2d}\right)/n$ and with probability $1-1/d$, we have 
\begin{equation}
    \left\|\mathbf{S}\bm{\Upsilon}\right\|_{F}^{2}\leq \frac{2\tau\left(1-\left(\frac{\lambda}{\lambda+1}\right)^{K+1}\right)^2\sigma^2\left(4\log n+\log{2d}\right)}{n},
\end{equation}
which completes the proof.
\end{proof}

\section{Related Work}
\label{appendix:related}
\paragraph{Adversarial Attacks on Graph Structure.} Several defense methods have recently been proposed to counter graph structural attacks~\citep{zhu2019robust,wang2022improving,chang2022adversarial,xie2023adversarially,gosch2023adversarial}.~\citet{wu2019adversarial} introduce GNN-Jaccard, which works with the graph's adjacency matrix to identify fake edges. RobustGCN~\citep{zugner2019robustgcn} employs Gaussian distributions in hidden layers to mitigate the effects of attacks. GNN-SVD~\citep{entezari2020all} reduces the rank of the adjacency matrix to counteract NETTACK, which targets high-rank singular components in graph data. Lastly, Pro-GNN~\citep{jin2020graph} presents an approach that jointly generates a new structural graph and a robust GNN model from the perturbed graph. By utilizing the additional term in our proposed GADC (IV), we can reconstruct the informative adjacency matrix, thus restoring efficient information flow and defending against adversarial structure attacks.

\paragraph{Implicit Denoising in GNNs.} Existing graph denoising works primarily focus on graph smoothing techniques~\citep{chen2014signal,wang2021dissecting,zhou2021graph,chen2022optimization}. It is well-established that GNNs can increase node feature smoothness through neighbor information aggregation, counteracting the influence of noisy features in the GNN's output. Some recent GNN models, such as GLP~\citep{li2019label}, S$^2$GC~\citep{zhu2021simple}, and IRLS~\citep{yang2021graph}, are derived from the perspective of signal denoising. Additionally, Ma et al.~\citep{ma2021unified} connect signal denoising to popular GNNs by considering the message passing scheme as a process of solving the GSD problem.

\section{Datasets Details}
\label{appendix:dataset}
\begin{table}[!hbtp]
\caption{Datasets statistics}
\label{tab:stat}
\begin{center}
\resizebox{\linewidth}{!}{
\setlength{\tabcolsep}{30pt}
\begin{tabular}{lcccc}
\toprule
  \multicolumn{1}{l}{\textbf{Dataset}}  & \# Nodes    & \# Edges   & \# Features  & \# Classes \\
\midrule
Cora  &  2708   & 5429 & 1433 & 7       \\
Citeseer    & 3327 & 4732 & 3703 & 6 \\
Pubmed   & 19717 & 44338 & 500 & 3   \\
Cornell   & 183 & 295 & 1703 & 5   \\
Texas & 183 & 309 & 1703 & 5 \\
Wisconsin & 251 & 499 & 1703 & 5 \\
Actor & 7600 & 33544 & 931 & 5 \\
Coauthor-CS & 18333 & 81894 & 6805 & 15 \\
Coauthor-Phy & 34493 & 247962 & 8415 & 5 \\
ogbn-products & 2449029 & 61859140 & 100 & 42\\
\bottomrule
\end{tabular}
}
\end{center}
\end{table}

\section{Hyperparameter Details}
\label{appendix:hyperparameter}
We provide details about the hyparatemeters of GADC in Table~\ref{tab:citation_hyper_attack}, ~\ref{tab:citation_hyper}, \ref{tab:co-author_hyper}, \ref{tab:products_hyper}, and \ref{tab:citation_hyper_flip}.

\begin{table}[htbp]
\setlength{\tabcolsep}{0.6mm}
\caption{The hyper-parameters for GADC (IV) on three citation datasets for defense evaluation against non-adaptive graph structure attacks.}
\label{tab:citation_hyper_attack}
    \centering
    \begin{tabular}{l|c|c|c|c|c|c|c|c|c|c}
    \toprule
    Model & dataset & runs  & lr & epochs & wight decay & hidden & dropout & $K$ & $\lambda$ & perturbation rate\\
    \midrule
    GADC (IV) & Cora & 10 & 0.02 & 100 & 1e-5 & 32 & 0.5 & 6 & 1 & 0.25 \\
    GADC (IV) & Cora & 10 & 0.02 & 100 & 1e-5 & 32 & 0.5 & 3 & 1 & 0.5 \\
    GADC (IV) & Cora & 10 & 0.02 & 100 & 1e-5 & 32 & 0.5 & 1 & 1 & 0.75 \\
    GADC (IV) & Citeseer & 10 & 0.02 & 100 & 1e-5 & 32 & 0.5 & 6 & 1 & 0.25 \\
    GADC (IV) & Citeseer & 10 & 0.02 & 100 & 1e-5 & 32 & 0.5 & 3 & 1 & 0.5 \\
    GADC (IV) & Citeseer & 10 & 0.02 & 100 & 1e-5 & 32 & 0.5 & 1 & 1 & 0.75 \\
    GADC (IV) & Pubmed & 10 & 0.02 & 200 & 1e-5 & 32 & 0.5 & 2 & 1 & 0.25 \\
    GADC (IV) & Pubmed & 10 & 0.02 & 200 & 1e-5 & 32 & 0.5 & 1 & 1 & 0.5 \\
    GADC (IV) & Pubmed & 10 & 0.02 & 200 & 1e-4 & 32 & 0.5 & 1 & 1 & 0.75 \\
    \bottomrule
    \end{tabular}
\end{table}

\begin{table}[htbp]
\setlength{\tabcolsep}{0.6mm}
\caption{The hyper-parameters for GADC (II) on three citation datasets for denoising evaluation against feature Gaussian noise.}
\label{tab:citation_hyper}
    \centering
    \begin{tabular}{l|c|c|c|c|c|c|c|c|c|c}
    \toprule
    Model & dataset & runs  & lr & epochs & wight decay & hidden & dropout & $K$ & $\lambda$ & $\varepsilon$\\
    \midrule
    GADC (II) & Cora & 100 & 0.2 & 100 & 1e-5 & 0 & 0 & 16 & 32 & 1 \\
    GADC (II) & Citeseer & 100 & 0.2 & 100 & 1e-5 & 0 & 0 & 16 & 32 & 1 \\
    GADC (II) & Pubmed & 100 & 0.2 & 100 & 1e-5 & 0 & 0 & 16 & 32 & 1 \\
    \bottomrule
    \end{tabular}
\end{table}

\begin{table}[t]
\setlength{\tabcolsep}{0.6mm}
\caption{The hyper-parameters for GADC (II) on two co-author datasets for denoising evaluation against feature Gaussian noise.}
\label{tab:co-author_hyper}
    \centering
    \begin{tabular}{l|c|c|c|c|c|c|c|c|c|c|c}
    \toprule
    Model & dataset & noise level & runs  & lr & epochs & wight decay & hidden & dropout & $K$ & $\lambda$ & $\varepsilon$ \\
    \midrule
    GADC (II) & Coauthor-CS & 0.1 & 10 & 0.2 & 1000 & 1e-7 & 0 & 0 & 16 & 1 & 1  \\
    GADC (II) & Coauthor-CS & 1 & 10 & 0.2 & 1000 & 1e-7 & 0 & 0 & 16 & 128 & 1  \\
    GADC (II) & Coauthor-Phy & 0.1 & 10 & 0.2 & 1000 & 1e-7 & 0 & 0 & 16 & 1 & 1   \\
    GADC (II) & Coauthor-Phy & 1 & 10 & 0.2 & 1000 & 1e-7 & 0 & 0 & 16 & 128 & 1  \\
    \bottomrule
    \end{tabular}
\end{table}

\begin{table}[t]
\setlength{\tabcolsep}{0.6mm}
\caption{The hyper-parameters for GADC (III) on ogbn-products dataset for denoising evaluation against feature Gaussian noise.}
\label{tab:products_hyper}
    \centering
    \begin{tabular}{l|c|c|c|c|c|c|c|c|c|c|c}
    \toprule
    Model  & noise level & runs  & lr & epochs & hidden & dropout & $K$ & $\lambda$ & $\varepsilon$ & layers & +MLP \\
    \midrule
    GADC (III) & 0.1 & 10 & 0.01 & 300 & 256 & 0.5 & 128 & 32 & 1e-2  & 3 & True \\
    GADC (III) & 1 & 10 & 0.01 & 300 & 256 & 0.5 & 128 & 256 & 1e-2  & 3 & True\\
    \bottomrule
    \end{tabular}
\end{table}

\begin{table}[htbp]
\setlength{\tabcolsep}{0.6mm}
\caption{The hyper-parameters for GADC (II) on three citation datasets for denoising evaluation against feature flip noise.}
\label{tab:citation_hyper_flip}
    \centering
    \begin{tabular}{l|c|c|c|c|c|c|c|c|c|c|c}
    \toprule
    Model & dataset & flip probability & runs  & lr & epochs & wight decay & hidden & dropout & $K$ & $\lambda$ & $\varepsilon$\\
    \midrule
    GADC (II)& Cora  & 0.1 & 100 & 0.2 & 100 & 1e-5 & 0 & 0 & 32 & 64 & 1e-5 \\
    GADC (II)& Cora  & 0.2 & 100 & 0.2 & 100 & 1e-5 & 0 & 0 & 32 & 64 & 1e-5 \\
    GADC (II)& Cora  & 0.4 & 100 & 0.2 & 100 & 1e-5 & 0 & 0 & 32 & 64 & 1e-1 \\
    GADC (II)& Citeseer  & 0.1 & 100 & 0.2 & 100 & 1e-5 & 0 & 0 & 32 & 64 & 1e-5 \\
    GADC (II)& Citeseer  & 0.2 & 100 & 0.2 & 100 & 1e-5 & 0 & 0 & 32 & 64 & 1e-5 \\
    GADC (II)& Citeseer  & 0.4 & 100 & 0.2 & 100 & 1e-5 & 0 & 0 & 32 & 64 & 1e-5 \\
    GADC (II)& Pubmed  & 0.1 & 100 & 0.2 & 100 & 1e-5 & 0 & 0 & 32 & 64 & 1e-1 \\
    GADC (II)& Pubmed  & 0.2 & 100 & 0.2 & 100 & 1e-5 & 0 & 0 & 32 & 64 & 1e-1 \\
    GADC (II)& Pubmed  & 0.4 & 100 & 0.2 & 100 & 1e-5 & 0 & 0 & 32 & 64 & 1e-1 \\
    \bottomrule
    \end{tabular}
\end{table}

\section{Denoising Performance against Flipping Perturbations}
We provide results in Table~\ref{table:flip}.
\label{appendix:result}

\begin{table*}[!htbp]
\caption{Denoising performance over 100 runs against flipping perturbation}
\begin{center}
    \setlength{\tabcolsep}{3mm}
        \scalebox{0.86}{\begin{tabular}{ccccccccccc}
        \toprule
        \multirow{2}{*}{Flipping probability} & \multicolumn{3}{c}{Cora} & \multicolumn{3}{c}{Citeseer} & \multicolumn{3}{c}{Pubmed}   \\
        \cmidrule(r){2-4} \cmidrule(r){5-7}  \cmidrule(r){8-10} 
        & 0.1 &  0.2 & 0.4 & 0.1 &  0.2 & 0.4 & 0.1 &  0.2 & 0.4 \\
        \midrule
        MLP & 21.2\scriptsize{$\pm$7.3} & 21.1\scriptsize{$\pm$8.0} & 23.3\scriptsize{$\pm$8.0} & 19.3\scriptsize{$\pm$3.3}  & 18.9\scriptsize{$\pm$2.8} & 18.9\scriptsize{$\pm$2.7} & 38.0\scriptsize{$\pm$6.3} & 39.0\scriptsize{$\pm$4.7} & 40.6\scriptsize{$\pm$2.8} \\
        GCN & 22.9\scriptsize{$\pm$13.6} & 19.0\scriptsize{$\pm$9.4} & 19.0\scriptsize{$\pm$9.3} & 18.6\scriptsize{$\pm$3.4} & 18.6\scriptsize{$\pm$3.1} & 18.5\scriptsize{$\pm$3.2} & 37.8\scriptsize{$\pm$6.6} & 38.1\scriptsize{$\pm$7.1} & 37.6\scriptsize{$\pm$8.0} \\
        GAT & 70.1\scriptsize{$\pm$1.5} & 65.6\scriptsize{$\pm$1.5} & 60.0\scriptsize{$\pm$3.2} & 45.3\scriptsize{$\pm$2.9} & 39.3\scriptsize{$\pm$3.4} & 26.0\scriptsize{$\pm$5.1} & 43.3\scriptsize{$\pm$2.7} & 49.5\scriptsize{$\pm$3.2} & 60.0\scriptsize{$\pm$4.1} \\
        APPNP & 75.6\scriptsize{$\pm$1.2} & 69.8\scriptsize{$\pm$1.5} & 65.3\scriptsize{$\pm$1.6} & \textbf{56.5\scriptsize{$\pm$2.2}} & 49.1\scriptsize{$\pm$1.8} & 42.8\scriptsize{$\pm$3.0} & 43.4\scriptsize{$\pm$2.7} & 52.3\scriptsize{$\pm$1.8} & 64.4\scriptsize{$\pm$1.7} \\
        GLP & 32.3\scriptsize{$\pm$0.8} & 30.8\scriptsize{$\pm$4.0} & 29.0\scriptsize{$\pm$6.4} & 19.7\scriptsize{$\pm$2.7} & 18.9\scriptsize{$\pm$2.4} & 18.8\scriptsize{$\pm$2.3} & 42.1\scriptsize{$\pm$2.0} & 41.5\scriptsize{$\pm$1.8} & 40.7\scriptsize{$\pm$0.1} \\
        S$^2$GC & 75.0\scriptsize{$\pm$1.6} & 71.5\scriptsize{$\pm$2.0} & 63.8\scriptsize{$\pm$4.4} & 49.9\scriptsize{$\pm$3.9} & 46.4\scriptsize{$\pm$3.2} & 43.4\scriptsize{$\pm$2.9} & 50.4\scriptsize{$\pm$2.2} & 60.2\scriptsize{$\pm$1.9} & 69.3\scriptsize{$\pm$1.6} \\
        IRLS & 66.4\scriptsize{$\pm$2.0} & 61.0\scriptsize{$\pm$1.9} & 54.7\scriptsize{$\pm$2.5} & 50.3\scriptsize{$\pm$2.7} & 45.9\scriptsize{$\pm$2.1} & 43.8\scriptsize{$\pm$1.7} & 51.4\scriptsize{$\pm$4.1} & 60.0\scriptsize{$\pm$3.8} & 69.0\scriptsize{$\pm$2.4} \\
        GADC (II) & \textbf{77.6\scriptsize{$\pm$1.2}} & \textbf{75.2\scriptsize{$\pm$1.4}} & \textbf{72.8\scriptsize{$\pm$1.4}} & {55.0\scriptsize{$\pm$3.0}} & \textbf{51.8\scriptsize{$\pm$2.4}} & \textbf{48.7\scriptsize{$\pm$2.4}} & \textbf{54.3\scriptsize{$\pm$2.0}} & \textbf{63.9\scriptsize{$\pm$1.6}} & \textbf{71.6\scriptsize{$\pm$1.1}} \\
        \bottomrule
        \end{tabular}}
        \end{center}
        \label{table:flip}
\end{table*}

\section{Ablation Study on Heterophilic Graphs}
\label{appendix:heterophilic}
In this section, we show the improvement of our proposed GADC (I) on heterophilic graphs with the additional term in the modified transition matrix through a series of ablation studies.

\paragraph{Datasets.} We use four datasets for fully supervised node classification on heterophilic graphs: Cornell, Texas, Wisconsin, and Actor. For each dataset, we randomly split nodes into 60\%, 20\%, and 20\% for training, validation, and testing, as suggested in~\citet{pei2020geom}.

\paragraph{Setting and Results.} We evaluate the performance of the additional term on heterophilic graphs by setting $\varepsilon$ to a series of values. We also add a 2-layer MLP with 64 hidden units after the feature aggregation of GADC (I). For GADC (I), we set $\lambda$ to 1 and $K$ to 16. We conduct experiments 100 times and report the mean test accuracy for the node classification task. Note that in each repeated run, we use a different dataset split. From the results summarized in Table~\ref{table:heterophily}, it is evident that incorporating the additional term ($\varepsilon \in \{1.0, 2.0\}$) can lead to improved performance compared to the scenario where it is disabled ($\varepsilon=0$). Furthermore, we notice that when $\varepsilon$ is set to 1.0, the performance improvement is maximized. This observation suggests that these specific perturbation levels could be optimal for the context of our study.
\begin{table}[!htbp]
\setlength{\tabcolsep}{3mm}
\centering
\caption{Test accuracy with $100$ runs of GADC (I) on heterophilic graphs. For these datasets, we use $\varepsilon$ of 0, 1.0, 2.0, 3.0, and 4.0, respectively. }
{\scriptsize
\begin{tabular}{l|c|c|c|c|c}
\hline
\toprule
$\varepsilon$ & 0 & 1.0 & 2.0 & 3.0 & 4.0  \\
\midrule
Cornell & 74.8 $\pm$ 7.0 & \textbf{76.9 $\pm$ 7.6} & 76.4 {$\pm$ 7.6} & 69.0 {$\pm$ 7.9} & 57.5 {$\pm$ 8.6}  \\
Texas & 74.9 {$\pm$ 6.1} & \textbf{77.8 {$\pm$ 6.9}} & 76.7 {$\pm$ 7.5} &  64.6 {$\pm$ 7.9} & 60.8 {$\pm$ 8.0} \\
Wisconsin & 73.2 {$\pm$ 5.9} & \textbf{78.2 {$\pm$ 6.5}} & 78.0 {$\pm$ 6.7} & 64.8 {$\pm$ 6.5} & 53.1 {$\pm$ 8.2} \\
Actor & 34.35 {$\pm$ 1.35} & \textbf{34.51 {$\pm$ 1.41}} & 25.42 {$\pm$ 1.07} & 25.30 {$\pm$ 1.06} & 25.16 {$\pm$ 1.05} \\
\bottomrule
\end{tabular}
}
\label{table:heterophily}
\end{table}

\section{Illustration of Higher-order Graph Connectivity Factor on Various Graphs}
\label{appendix:illustration}
\begin{figure*}[htbp] 
\centering 
\includegraphics[width=0.8\textwidth]{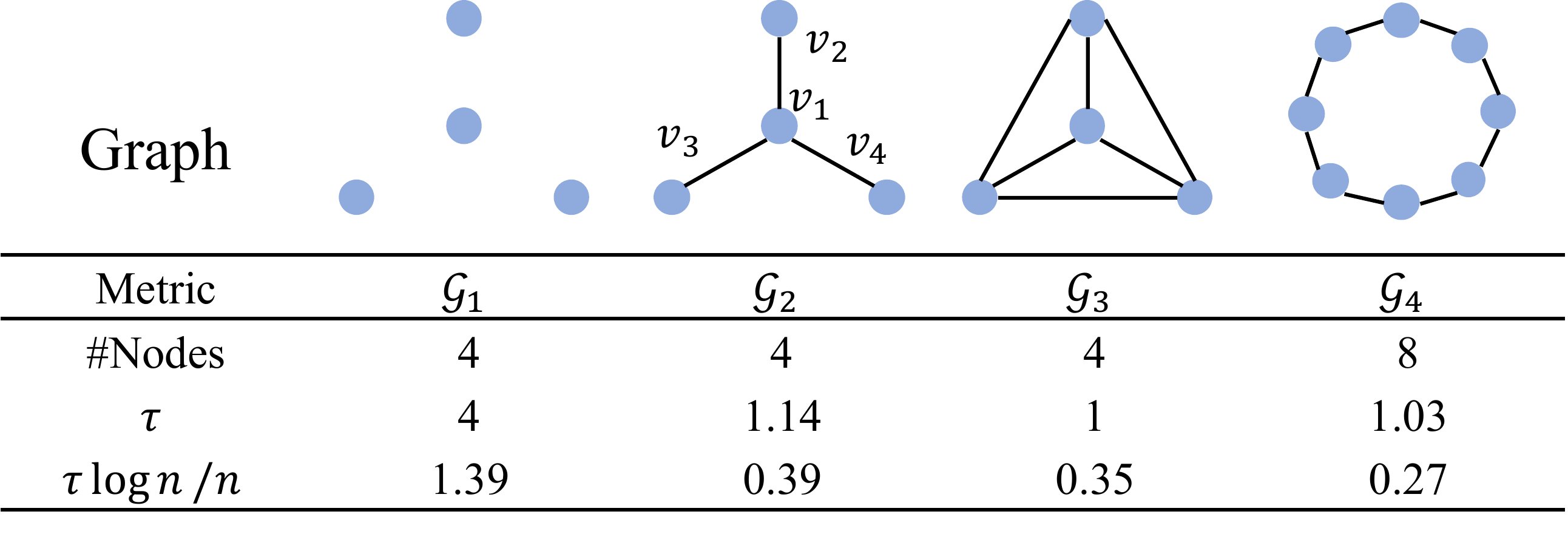} 
\caption{An illustration of $\tau$ on various graph structures. $\mathcal{G}_1$: nodes are isolated; $\mathcal{G}_2$: a star graph with 4 nodes; $\mathcal{G}_3$: a complete graph with 4 nodes. For computing $\tau$, we set $\lambda$ and $K$ as 32. $\tau$ has a smaller value if the graph has good connectivity.} 
\label{fig:graph structure} 
\end{figure*}
In this case study, we present four illustrative samples in Figure~\ref{fig:graph structure}: $G_1$, $G_2$, $G_3$, and $G_4$.

Graphs $G_1$, $G_2$, and $G_3$ have the same number of nodes. However, the nodes in $G_1$ are isolated, $G_2$ consists of a single connected component with a central node $v_1$, and $G_3$ is a complete graph where each node is connected to every other node.
Additionally, we include a larger graph, $G_4$, to analyze the influence of graph size on the denoising effect.
From Figure~\ref{fig:graph structure}, we can derive the following insights:
\begin{itemize}
    \item No Denoising on Isolated Graph ($G_1$): Since the nodes in $G_1$ are isolated, there is no denoising effect, as indicated by the large value of $\tau\log n /n$.

    \item Best Denoising on Complete Graph ($G_3$): The complete graph $G_3$ exhibits the best denoising effect among the graphs of the same size. This is because the elements in each row are uniformly distributed, leading to the lowest possible value of $\tau$.

    \item Central Node Impact on Connected Graph ($G_2$): Although $G_2$ has only one connected component, the presence of a central node $v_1$ creates an imbalance in the value of elements in each row. Consequently, $\tau$ tends to be larger compared to $G_3$.

    \item Denoising on Decentralized Larger Graph ($G_4$): The decentralized structure of $G_4$ also results in a smaller value of $\tau$, indicating a good denoising effect.

    \item Influence of Graph Size: Larger graphs, such as $G_4$, tend to have a better denoising effect due to their size.
\end{itemize}
By analyzing these graphs, we get a deeper understanding of how graph structure and size influence the denoising effect.

\section{Reproducibility}
We use Pytorch~\citep{paszke2019pytorch} and PyG~\citep{fey2019fast} to implement GADC. All the experiments in this work are conducted on a single NVIDIA Tesla A100 with 80GB memory size. The software that we use for experiments are Python 3.6.8, pytorch 1.9.0, pytorch-scatter 2.0.9, pytorch-sparse 0.6.12, pyg 2.0.3, ogb 1.3.4, numpy 1.19.5, torchvision 0.10.0, and CUDA 11.1.

\section{Work Statement}
This is an extended work based on our manuscript~\citep{liu2022powerful} that appeared in 2022 NeurIPS GFrontiers. Songtao Liu independently extends \citet{liu2022powerful} and rewrites the initial versions of this work.

\end{document}